\newcommand{\R}{\mathbb{R}}
\newcommand{\N}{\mathbb{N}}
\newcommand{\Dg}{\mathrm{Dg}}
\newcommand{\limitinf}[1]{\underset{{#1}\rightarrow+\infty}{\rm lim}}
\newcommand{\card}{{\rm card}}
\newcommand{\HS}{\mathcal{H}}
\DeclareMathOperator{\dimA}{dim_A}
\newcommand{\SpD}{{\mathcal D}}
\newcommand{\SpLD}{{\mathcal D}^L}
\newcommand{\SpND}{{\mathcal D}_N}
\newcommand{\SpNLD}{{\mathcal D}_N^L}
\newtheorem{thm}{Theorem}[section]
\newtheorem{prop}[thm]{Proposition}
\newtheorem{lem}[thm]{Lemma}
\newtheorem{defin}[thm]{Definition}
\newtheorem{rem}[thm]{Remark}
\title{On the Metric Distortion of Embedding Persistence Diagrams into separable Hilbert spaces}
\author{Mathieu Carri\`ere\footnote{mc4660@columbia.edu} and Ulrich Bauer\footnote{ulrich.bauer@tum.de}}
\date{}
\begin{document}

\maketitle

\begin{abstract}
Persistence diagrams are important descriptors in Topological Data Analysis.
Due to the nonlinearity of the space of persistence diagrams equipped with their {\em diagram distances}, 
most of the recent attempts at using persistence diagrams in machine learning have been done through 
kernel methods, i.e., embeddings of persistence diagrams into Reproducing Kernel Hilbert Spaces, 
in which all computations can be performed easily. Since persistence diagrams enjoy theoretical stability 
guarantees for the diagram distances, the {\em metric properties} of the feature map, i.e., the relationship 
between the Hilbert distance and the diagram distances, are of central interest for understanding if the 
persistence diagram guarantees carry over to the embedding. In this article, we study the possibility of embedding 
persistence diagrams into separable Hilbert spaces, with bi-Lipschitz maps. In particular, 
we show that for several stable embeddings into infinite-dimensional Hilbert spaces defined in the literature, 
any lower bound must depend on the cardinalities of the persistence diagrams,
and that when the Hilbert space is finite dimensional, finding a bi-Lipschitz embedding is impossible, even when restricting 
the persistence diagrams to have bounded cardinalities.
\end{abstract}

\section{Introduction}
\label{sec:intro}

The increase of available data in both academia and industry have been exponential over the past few decades,
making data analysis ubiquitous in many different fields of science. Machine learning has proved to be
one of the most prominent field of data science, leading to astounding results in various applications, such as
image and signal processing. Topological Data Analysis (TDA)~\cite{Carlsson09a} is one specific field of machine learning, which
focuses more on {\em complex} rather than big data. The general assumption of TDA is that data is actually sampled 
from geometric or low-dimensional domains, whose geometric features are relevant to the analysis.
These geometric features are usually encoded in a mathematical object called {\em persistence diagram},
which is roughly a set of points in the plane, each point representing a topological feature whose size is 
contained in the coordinates of the point. Persistence diagrams have been proved to bring complementary
information to other traditional descriptors in many different applications, often leading to large result improvements. 
This is also due to the so-called {\em stability properties} of the persistence diagrams, which state that persistence diagrams computed on similar data
are also very close in the diagram distances~\cite{Cohen07,Bauer13b,Chazal16a}.   

Unfortunately, the use of persistence diagrams in machine learning methods is not straightforward, since many algorithms expect data to be 
Euclidean vectors, while persistence diagrams are sets of points with possibly different cardinalities. Moreover, the {\em diagram distances}
used to compare persistence diagrams are computed with optimal matchings, and thus quite different from Euclidean metrics.
The usual way to cope with such difficult data is to use {\em kernel methods}. A kernel is a symmetric function on the data 
whose evaluation on a pair of data points equals the scalar product of the images of these points 
under a \emph{feature map} into a Hilbert space, called the {\em Reproducing Kernel Hilbert Space} of the kernel. 
Many algorithms can be {\em kernelized}, such as PCA and SVM, allowing one to handle non-Euclidean data as soon as a kernel or a feature map is available. 

Hence, the question of defining a feature map into a Hilbert space has been
intensively studied in the past few years, and, as of today, various methods can be implemented,
either into finite or infinite dimensional Hilbert spaces~\cite{Bubenik15, Carriere15a, Reininghaus15, Kusano16, Adams17, Carriere17e, Hofer17}.  
Since persistence diagrams are known to enjoy stability properties, it is also natural to ask the same guarantee for their 
embeddings. Hence, all feature maps defined in the literature satisfy a stability property stating that the Hilbert
distance between the image of the persistence diagrams is upper bounded by the diagram distances. A more difficult question is to
prove whether a lower bound also holds or not. Even though one attempt has already been made to show such a lower bound
for the so-called Sliced Wasserstein distance in~\cite{Carriere17e}, the question remains open in general.

\paragraph*{Contributions.} In this article, we tackle the general question of defining
bi-Lipschitz embeddings of persistence diagrams into separable Hilbert spaces. More precisely, we show that:

\begin{itemize}
\item For several stable feature maps defined in the literature, if such a bi-Lipschitz embedding exists, then 
the lower bound goes to 0 or the upper bound goes to $+\infty$ as the number of points and their coordinates increase in the persistence diagrams 
(Theorem~\ref{thm:th1} and Proposition~\ref{prop:fmap}). 
\item Such a bi-Lipschitz embedding does not exist if the Hilbert space is finite dimensional (Theorem~\ref{th:nonembedRn}),
\end{itemize} 

Finally, we also provide experimental evidence of this behavior by computing the metric distortions of various feature maps for persistence diagrams with increasing cardinalities.

\paragraph*{Related work.}
Feature maps for persistence diagrams can be classified into two different classes, depending whether the corresponding Hilbert space is 
finite or infinite dimensional.

In the infinite dimensional case, the first attempt was that proposed in~\cite{Bubenik15}, in which persistence diagrams are turned into
$L^2$ functions, called Landscapes, by computing the homological rank functions given by the persistence diagram points.
Another common way to define a feature map is to see the points of the persistence diagrams as centers
of Gaussians with a fixed bandwidth, weighted by the distance of the point to the diagonal. This is the approach
originally advocated in~\cite{Reininghaus15}, and later generalized in~\cite{Kusano17}, leading to the so-called
{\em Persistence Scale Space} and {\em Persistence Weighted Gaussian} feature maps.
Another possibility is to define a Gaussian-like feature map by using the {\em Sliced Wasserstein distance} between persistence diagrams, which
is conditionnally negative definite. This implicit feature map, called the {\em Sliced Wasserstein} map, was defined in~\cite{Carriere17e}.

In the finite dimensional case, many different possibilities are available. One may consider evaluating a family of tropical polynomials 
onto the persistence diagram~\cite{Verovsek16}, taking the sorted vector of the pairwise distances
between the persistence diagram points~\cite{Carriere15a}, or computing the coefficients of a complex polynomial whose roots are given by the persistence diagram points~\cite{diFabio15}.
Another line of work was proposed in~\cite{Adams17} by discretizing the Persistence Scale Space feature map. The idea is to discretize the plane into a fixed grid, and then compute
a value for each pixel by integrating Gaussian functions centered on the persistence diagram points. Finally, persistence diagrams have been incorporated in deep learning
frameworks in~\cite{Hofer17}, in which Gaussian functions (whose means and variances are optimized by the neural network during training) 
are integrated against persistence diagrams seen as discrete measures.

\section{Background}
\label{sec:background}

\subsection{Persistence Diagrams}

{\em Persistent homology} is a technique of TDA coming from topological algebra that allows the user to compute and encode topological information 
of datasets in a compact descriptor called the {\em persistence diagram}. Given a dataset $X$, often given in the form of a point cloud in $\R^n$,
and a continuous and real-valued function $f:X\rightarrow\R$, the persistence diagram of $f$ can be computed under mild conditions (the function has to be {\em tame}, 
see~\cite{Chazal16a} for more details), and consists in a finite set of points with multiplicities 
in the upper-diagonal half-plane $\Dg(f)=\{(x_i,y_i)\}\subset \{(x,y)\in\R^2:y > x\}$. 
This set of points is computed from the family of 
{\em sublevel sets} of $f$, that is the sets of the form $f^{-1}((-\infty,\alpha])$, for some $\alpha\in\R$. 
More precisely, persistence diagrams encode the different {\em topological events} that occur as $\alpha$ increases from $-\infty$ to $+\infty$. 
Such topological events include creation and merging of connected components and cycles in every dimension; see Figure~\ref{fig:examplePersistence}.
Intuitively, persistent homology records, for each topological feature that appears in the family of sublevel sets, the value $\alpha_b$
at which the feature appears, called the {\em birth value}, and the value $\alpha_d$ at which it gets merged or filled in, called the {\em death value}.
These values are then used as coordinates for a corresponding point in the persistence diagram. Note that 
several features may have the same birth and death values, so points in the persistence diagram have multiplicities. 
Moreover, since $\alpha_d\geq \alpha_b$, these points are always
located above the diagonal $\Delta=\{(x,x):x\in\R\}$. A general intuition about persistence diagrams is that the distance of a point to $\Delta$ is a direct measure of its relevance: if a point is close to $\Delta$, it means that the corresponding cycle got filled in right after its appearance, thus suggesting that
it is likely due to noise in the dataset. On the contrary, points that are far away from $\Delta$ represent cycles with a significant life span, and are more likely to be
relevant for the analysis.  
We refer the interested reader to~\cite{Edelsbrunner10,Oudot15} for more details about persistent homology. 

\begin{figure}
\centering
\includegraphics[width=14cm]{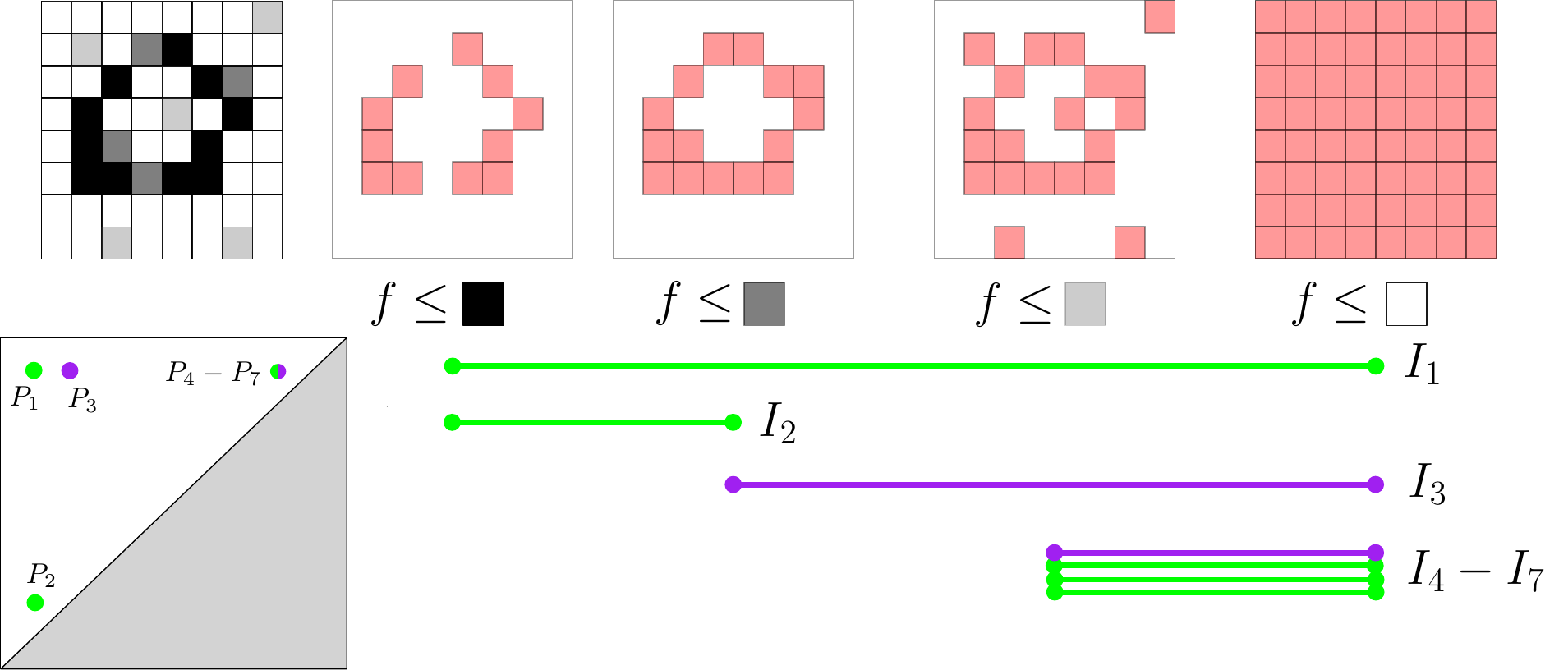}
\caption{\label{fig:examplePersistence} Example of persistence diagram computation. The space we consider is a blurry image of a zero, and the function $f$ that we use is the grey level value on each
pixel. We show four different sublevel sets of $f$. For each sublevel set, the corresponding pixels are displayed in pink color. 
In the first sublevel set, two connected components are present in the sublevel set, so we start two intervals $I_1$ and $I_2$. 
In the second one, one connected component got merged to the other,
so we stop the corresponding interval $I_2$, and a cycle (loop) is created, so we start a third interval $I_3$. 
In the third sublevel set, a new small cycle is created, as well as three more connected
components. In the fourth sublevel set, all pixels belong to the set: all cycles are filled in and all connected components are merged together, so we stop all intervals.
Finally, each interval $I_k$ is represented as a point $P_k$ in the plane (using the endpoints as coordinates). 
}
\end{figure}

 
\paragraph*{Notation.} Let $\SpD$ be the space of persistence diagrams with countable number of points.
More formally, $\SpD$ can be equivalently defined as a functional space  
$\{m:\R^2\setminus\Delta\rightarrow\N \,:\, {\rm supp}(m) \text{ is countable}\}$, where each point $q\in{\rm supp}(m)$ is a point 
in the corresponding persistence diagram with multiplicity $m(q)$.
Let $\SpND$ be the space of persistence diagrams with less than $N$ points, i.e., $\SpND=\{m:\R^2\setminus\Delta\rightarrow\N \,:\, \sum_q m(q) < N\}$.
%
Let $\SpLD$ be the space of persistence diagrams included in $[-L,L]^2$, i.e., $\SpLD=\{m:\R^2\setminus\Delta\rightarrow\N \,:\,{\rm supp}(m) \subset [-L,L]^2\}$.
%
Finally, let $\SpNLD$ be the space of persistence diagrams with less than $N$ points included in $[-L,L]^2$, i.e., $\SpNLD=\SpND\cap\SpLD$.
Obviously, we have the following sequences of (strict) inclusions: 
$\SpNLD\subset\SpND\subset\SpD$, and
$\SpNLD\subset\SpLD\subset\SpD$.
  
\paragraph*{Diagram distances.} Persistence diagrams can be efficiently compared using the {\em diagram distances}, which is a family of
distances parametrized by an integer $p$ that rely on the computation of {\em partial matchings}. Recall that two persistence diagrams 
$\Dg_1$ and $\Dg_2$ may have different number of points. A {\em partial matching} $\Gamma$ between $\Dg_1$ and $\Dg_2$ is 
a subset of $\Dg_1\times\Dg_2$. It comes along with $\Gamma_1$ (resp. $\Gamma_2$), which is the set of points of $\Dg_1$ (resp. $\Dg_2$) 
that are not matched to a point of $\Dg_2$ (resp. $\Dg_1$) by $\Gamma$.  
The $p$-cost of $\Gamma$ is given as:
\[
c_p(\Gamma) =\sum_{(p,q)\in\Gamma}\|p-q\|_\infty^p + \sum_{p\in\Gamma_1}\|p-\Delta\|_\infty^p + \sum_{q\in\Gamma_2}\|q-\Delta\|_\infty^p.
\]
The $p$-diagram distance is then defined as the cost of the best partial matching:

\begin{defin}
Given two persistence diagrams $\Dg_1$ and $\Dg_2$, the $p$-{\em diagram distance} $d_p$ is defined as:
$$d_p(\Dg_1,\Dg_2)={\rm inf}_{\Gamma}\ \sqrt[\leftroot{-3}\uproot{3}p]{c_p(\Gamma)}.$$
\end{defin}

Note that in the literature, these distances are often called the {\em Wasserstein distances} between persistence diagrams.
Here, we follow the denomination of~\cite{Carriere17e}. In particular, taking a maximum instead of a sum 
in the definition of the cost,
\[
c_\infty(\Gamma) =\max_{(p,q)\in\Gamma}\|p-q\|_\infty + \max_{p\in\Gamma_1}\|p-\Delta\|_\infty + \max_{q\in\Gamma_2}\|q-\Delta\|_\infty.
\]
allows to add one more distance in the family, the {\em bottleneck distance} $d_\infty(\Dg_1,\Dg_2)={\rm inf}_\Gamma\ c_\infty(\Gamma)$.

\paragraph*{Stability.} A useful property of persistence diagrams is {\em stability}. Indeed, it is well known in the literature that
persistence diagrams computed from close functions are close themselves in the bottleneck distance:

\begin{thm}[\cite{Cohen07,Chazal16a}]
Given two tame functions $f,g:X\rightarrow\R$, one has the following inequality:
\begin{equation}\label{eq:stab}
d_\infty(\Dg(f),\Dg(g))\leq \|f-g\|_\infty.
\end{equation}
\end{thm}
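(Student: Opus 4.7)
The plan is to follow the classical interleaving strategy. Set $\e = \|f-g\|_\infty$. First I would observe that for every $\alpha \in \R$, the pointwise bound $|f(x)-g(x)| \leq \e$ immediately gives the sublevel set inclusions
\[
f^{-1}((-\infty,\alpha]) \subseteq g^{-1}((-\infty,\alpha+\e]) \subseteq f^{-1}((-\infty,\alpha+2\e]),
\]
and symmetrically with $f$ and $g$ swapped. These inclusions form a commuting ladder of topological spaces parametrized by $\alpha$.

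Next, I would apply singular homology (in every degree) to obtain two persistence modules $V_\alpha = H_*(f^{-1}((-\infty,\alpha]))$ and $W_\alpha = H_*(g^{-1}((-\infty,\alpha]))$ together with linear maps $\phi_\alpha: V_\alpha \to W_{\alpha+\e}$ and $\psi_\alpha: W_\alpha \to V_{\alpha+\e}$ induced by the inclusions. By functoriality, these maps commute with the internal persistence structure maps and satisfy $\psi_{\alpha+\e}\circ\phi_\alpha = V_\alpha \to V_{\alpha+2\e}$ and similarly for the other composition. This is exactly an $\e$-interleaving between $V$ and $W$ in the sense of persistence modules.

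The final step is to invoke the \emph{Algebraic Stability Theorem} for q-tame persistence modules (Chazal et al.~\cite{Chazal16a}), which asserts that any $\e$-interleaving yields a partial matching $\Gamma$ between the persistence diagrams such that every matched pair is within $\e$ in $\|\cdot\|_\infty$ and every unmatched point is within $\e$ of the diagonal $\Delta$. Tameness of $f$ and $g$ guarantees that $V$ and $W$ are q-tame so the theorem applies, and feeding $\Gamma$ into the bottleneck cost $c_\infty$ gives $d_\infty(\Dg(f),\Dg(g)) \leq \e = \|f-g\|_\infty$.

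The main obstacle is the Algebraic Stability Theorem itself, whose proof is genuinely substantial. The translation from $\|f-g\|_\infty$ to an interleaving is routine, but constructing the $\e$-matching from an abstract interleaving of q-tame modules is the deep step. In the finite setting one can argue via a Hall-type marriage lemma applied to box modules dominating rectangles in the diagrams; the extension to q-tame modules proceeds by approximating with finite persistence modules and passing to a limit, which is where most of the technical work lies.
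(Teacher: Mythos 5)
Your outline is correct and matches the standard argument: the paper itself gives no proof of this theorem but cites \cite{Cohen07,Chazal16a}, and the interleaving-plus-Algebraic-Stability route you describe is precisely the proof strategy of the second of those references. You rightly identify that all the real work is in the Algebraic Stability Theorem, which it is legitimate to invoke here as a black box.
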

In other words, the map $\Dg$ is 1-Lipschitz.
Note that stability results exist as well for the other diagram distances, 
but these results are weaker than the above Lipschitz condition, and they require more conditions---see~\cite{Oudot15}.

\subsection{Bi-Lipschitz embeddings.}
The main question that we adress in this article is the one of preserving the persistence diagram metric properties when using embeddings into Hilbert spaces.
For instance, one may ask the images of persistence diagrams under a feature map into a Hilbert space to be stable as well.
A natural question is then whether a lower bound also holds, i.e.,
whether the feature map $\Phi$ is a {\em bi-Lipschitz embedding} between $(\SpD,d_p)$ and $\HS$.

\begin{defin}
Let $(X,d_X)$ and $(Y,d_Y)$ be two metric spaces. A {\em bi-Lipschitz embedding} between $(X,d_X)$ and $(Y,d_Y)$
is a map $\Phi:X\rightarrow Y$ such that there exist constants $0<A,B<\infty$ such that:
\[ A\,d_X(x,x')\leq d_Y(\Phi(x),\Phi(x'))\leq B\,d_X(x,x'), \]
for any $x,x'\in X$. The metrics $d_X$ and $d_Y$ are called {\em strongly equivalent}, and
the constants $A$ and $B$ are called 
the {\em lower} and {\em upper} metric distortion bounds respectively.
If $A=B=1$, $\Phi$ is called an {\em isometric} embedding. 
\end{defin}


Note that this definition is equivalent to the commonly used definition that additionally requires $A=\frac1B$.

\begin{rem}
Finding an isometric embedding of persistence diagrams into a Hilbert space
is impossible since geodesics are unique in a Hilbert space while this is not the case for persistence diagrams,
as shown in the proof of Proposition~2.4 in~\cite{Turner14}.
\end{rem}

\begin{rem}
\label{rmk:SW}
For feature maps that are {\em bounded}, i.e., those maps $\Phi$ such that there exists a constant $C>0$
for which $\|\Phi(\Dg)\|\leq C$ for all $\Dg$, it is obviously impossible to find a bi-Lipschitz embedding. This involves for instance the
Sliced Wasserstein (SW) feature map~\cite{Carriere17e}, which is defined implicitly from a Gaussian-like function. However, note that
if the SW feature map is restricted to a set of persistence diagrams which are close to each other with respect to the SW distance,
then the distance in the Hilbert space corresponding to the SW feature map is actually equivalent to the square root of the SW distance. Hence,
we added the square root of the SW distance in our experiment in Section~\ref{sec:expe}. 
\end{rem}

\section{Mapping into separable Hilbert spaces}


In our first main result, we use {\em separability} to determine whether a bi-Lipschitz embedding can exist between the space of persistence diagrams and a Hilbert space.

\begin{defin}
A metric space is called {\em separable} if it has a dense countable subset. 
\end{defin}

For instance, the following three Hilbert spaces (equipped with their canonical metrics) 
are separable: $\R^n$, $\ell_2$ and $L_2(\Omega)$, where $\Omega$ is separable. The two following results 
describe well-known properties of separable spaces. 

\begin{prop}\label{prop:separsub}
Any subspace of a separable metric space is separable as well.
\end{prop}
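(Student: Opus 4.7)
The plan is to produce a countable dense subset of the subspace $Y$ directly from a countable dense subset of the ambient space $X$. Let $D = \{x_n : n \in \N\}$ be a countable dense subset of $X$. I would index by pairs $(n,m) \in \N \times \N$: for each pair such that the open ball $B(x_n, 1/m) \cap Y$ is nonempty, invoke the axiom of countable choice to pick a single point $y_{n,m} \in B(x_n, 1/m) \cap Y$. The resulting collection $E = \{y_{n,m}\}$ is a countable subset of $Y$ (as a countable union of singletons indexed by a subset of $\N \times \N$).

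Next I would verify that $E$ is dense in $Y$. Fix any $y \in Y$ and any $\e > 0$. Choose an integer $m$ with $2/m < \e$. By density of $D$ in $X$, there exists $n$ with $d(x_n, y) < 1/m$, which in particular shows that $y \in B(x_n, 1/m) \cap Y$, so the intersection is nonempty and the point $y_{n,m}$ is defined. Then by the triangle inequality
\[
d(y_{n,m}, y) \leq d(y_{n,m}, x_n) + d(x_n, y) < \tfrac{1}{m} + \tfrac{1}{m} = \tfrac{2}{m} < \e,
\]
which shows that every $y \in Y$ lies in the closure of $E$ inside $Y$.

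There is no real obstacle here; the only subtlety worth flagging is the use of countable choice to select the points $y_{n,m}$, which is standard and implicit in most treatments of separability in metric spaces. The argument uses only the metric structure (specifically, that rational-radius balls around a countable dense set form a countable basis of the topology), so it applies verbatim to $(\SpD, d_p)$ and to any Hilbert space setting considered later in the paper.
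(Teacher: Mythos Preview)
Your proof is correct and is the standard argument for this classical fact. The paper itself does not supply a proof of this proposition; it is stated as a well-known property of separable metric spaces and used without further justification, so there is nothing to compare against beyond noting that your argument is exactly the textbook one.
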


\begin{prop}\label{prop:separ}
Let $(X,d_X)$ and $(Y,d_Y)$ be two metric spaces, and assume there is a bi-Lipschitz embedding $\Phi:X\rightarrow Y$,
with Lipschitz constants $A$ and $B$.
Then $X$ is separable if and only if ${\rm im}(\Phi)$ is separable.
\end{prop}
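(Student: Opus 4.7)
The plan is to prove both directions by exhibiting explicit countable dense sets, exploiting the two Lipschitz bounds $A,B$ to push density through $\Phi$ in either direction. The key observation to note at the outset is that a bi-Lipschitz map is automatically injective: if $\Phi(x)=\Phi(x')$ then $A\,d_X(x,x')\leq d_Y(\Phi(x),\Phi(x'))=0$, so $\Phi$ admits a (set-theoretic) inverse $\Phi^{-1}:\mathrm{im}(\Phi)\to X$.

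For the forward direction, I would assume $X$ separable, pick a countable dense $D\subseteq X$, and show that $\Phi(D)$ is a countable dense subset of $\mathrm{im}(\Phi)$. Given $y=\Phi(x)\in\mathrm{im}(\Phi)$ and $\e>0$, density of $D$ in $X$ yields some $d\in D$ with $d_X(x,d)<\e/B$, and the upper Lipschitz bound then gives $d_Y(\Phi(x),\Phi(d))\leq B\,d_X(x,d)<\e$. This shows $\Phi(D)$ is dense in $\mathrm{im}(\Phi)$.

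For the reverse direction, I would assume $\mathrm{im}(\Phi)$ separable with countable dense set $E\subseteq\mathrm{im}(\Phi)$, and take $D=\Phi^{-1}(E)\subseteq X$, which is countable by injectivity of $\Phi$. Given $x\in X$ and $\e>0$, density of $E$ in $\mathrm{im}(\Phi)$ yields some $e\in E$ with $d_Y(\Phi(x),e)<A\e$; writing $e=\Phi(x_e)$ with $x_e\in D$ and applying the lower Lipschitz bound gives $A\,d_X(x,x_e)\leq d_Y(\Phi(x),\Phi(x_e))<A\e$, so $d_X(x,x_e)<\e$, establishing density of $D$ in $X$.

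There is no real obstacle here; the proposition is essentially the well-known fact that separability is preserved under bi-Lipschitz (indeed under uniformly continuous) surjections, specialized to the image of $\Phi$. The only small point to be careful about is that one should work with $\mathrm{im}(\Phi)$ rather than all of $Y$ (since a non-separable $Y$ might contain a separable image), and that the injectivity of $\Phi$ is what makes the pullback set $\Phi^{-1}(E)$ countable in the backward direction. Both steps only use one of the two Lipschitz inequalities, which is why the bi-Lipschitz hypothesis is needed rather than a one-sided bound.
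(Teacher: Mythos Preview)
Your proof is correct and complete; note that the paper itself does not give a proof of this proposition, treating it as a well-known property of separable metric spaces, so there is no proof to compare against.
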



The following lemma shows that for a feature map $\Phi$ which is bi-Lipschitz when restricted to $\SpNLD$,
the limits of the corresponding constants can actually be used to study the general metric distortion in $\SpD$.  

\begin{lem}\label{lem:PDs}
Let $p\in\mathbb{N}^*$ and let $d$ be a metric on persistence diagrams such that $d$ is continuous with respect to $d_p$ on $\SpD$.
Let 
\begin{align*}
R_N^L&=\left\{\frac{d_p(\Dg,\Dg')}{d(\Dg,\Dg')} : \Dg\neq\Dg'\in\SpNLD\right\}, \\
A_N^L &= \inf\ R_N^L 
\quad \text{and} \quad B_N^L = \sup\ R_N^L.
\end{align*}
%
Since $A_N^L$ is nonincreasing  and $B_N^L$ is nonincreasing with respect to $N$ and $L$, we define:
\begin{align*}
A_N&=
\liminf_{L \to \infty} A_N^L, 
\ \ A^L=
\liminf_{N \to \infty} A_N^L, 
\ \ A=
\liminf_{N,L \to \infty} A_N^L.\\
B_N&=
\limsup_{L \to \infty} B_N^L, 
\ \ B^L=
\limsup_{N \to \infty} B_N^L, 
\ \ B=
\limsup_{N,L \to \infty} B_N^L.
\end{align*}
We define $B_N$, $B^L$, $B$ similarly, since $B_N^L$ is nondecreasing with respect to $N$ and $L$.
Then the following inequalities hold:
\begin{align*}
 A^L\, d(\Dg,\Dg') \leq\, & d_p(\Dg,\Dg') \leq B^L\, d(\Dg,\Dg') &\text{ for all }\Dg,\Dg'&\in\SpLD, \\
 A_N\, d(\Dg,\Dg') \leq\, & d_p(\Dg,\Dg') \leq B_N\, d(\Dg,\Dg') &\text{ for all }\Dg,\Dg'&\in\SpND, \\ 
 A\,   d(\Dg,\Dg') \leq\, & d_p(\Dg,\Dg') \leq B\,   d(\Dg,\Dg') &\text{ for all }\Dg,\Dg'&\in\SpD.
\end{align*}

\end{lem}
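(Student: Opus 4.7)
The plan is to extend the bi-Lipschitz inequalities from $\SpNLD$ to the larger spaces by an exhaustion argument: approximate a general diagram by one with finitely many points inside a bounded box, apply the known inequality on $\SpNLD$, and pass to the limit using the continuity of $d$ with respect to $d_p$. I would first observe that $\SpNLD \subseteq \mathcal{D}_{N'}^{L'}$ whenever $N \leq N'$ and $L \leq L'$, so $R_N^L \subseteq R_{N'}^{L'}$; hence $A_N^L$ is nonincreasing and $B_N^L$ is nondecreasing jointly in $(N, L)$, and the $\liminf$/$\limsup$ appearing in the definitions of $A_N$, $A^L$, $A$, $B_N$, $B^L$, $B$ are actually monotone limits.

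For $\Dg \in \SpD$ with finite $p$-total persistence, I would construct an approximation $\Dg_N^L \in \SpNLD$ by restricting $\Dg$ to $[-L,L]^2$ and retaining only the $N-1$ points of largest persistence $\|q-\Delta\|_\infty$. The partial matching that pairs preserved points with themselves and sends the discarded points to the diagonal yields $d_p(\Dg, \Dg_N^L)^p \leq \sum_{q \notin \Dg_N^L} \|q-\Delta\|_\infty^p$, a tail of a convergent series that vanishes as $N, L \to \infty$. Applying the same construction to $\Dg'$ and invoking the triangle inequality gives $d_p(\Dg_N^L, (\Dg')_N^L) \to d_p(\Dg, \Dg')$, while the continuity of $d$ with respect to $d_p$ gives $d(\Dg_N^L, (\Dg')_N^L) \to d(\Dg, \Dg')$.

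The three inequalities now follow by plugging the approximations into $A_N^L\, d(\Dg_N^L, (\Dg')_N^L) \leq d_p(\Dg_N^L, (\Dg')_N^L) \leq B_N^L\, d(\Dg_N^L, (\Dg')_N^L)$ and taking $\liminf$ on the lower side and $\limsup$ on the upper side, letting $N \to \infty$ (for the $\SpLD$ case), $L \to \infty$ (for the $\SpND$ case), or both (for the $\SpD$ case). The main obstacle is that a generic diagram in $\SpD$ may have infinite total persistence, so the bare truncation does not converge in $d_p$. The remedy is to restrict to pairs with $d_p(\Dg, \Dg') < \infty$ — the remaining pairs render the inequalities trivial when $d(\Dg, \Dg')$ is also infinite — and to perform the truncation relative to an almost-optimal matching between $\Dg$ and $\Dg'$, so that unmatched points lying outside $[-L,L]^2$ are paired with each other rather than with the diagonal, keeping the approximation cost a vanishing tail.
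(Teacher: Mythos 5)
Your proposal is correct and, once you replace the naive persistence-based truncation by the truncation relative to a (near-)optimal matching in your final paragraph, it coincides with the paper's own argument: the paper enumerates the pairs $(p_i,q_i)$ of an optimal matching, builds the approximating diagrams $\Dg_n,\Dg'_n\in\mathcal D_{s_n}^{l_n}$ from the first $n$ pairs, applies the inequality on $\SpNLD$, and passes to the limit using continuity of $d$ with respect to $d_p$. The residual case $d_p(\Dg,\Dg')=\infty$ that your restriction leaves open is likewise left open in the paper, which only notes in a footnote that the needed convergence holds for the diagram families used in its later results.
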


Note that $A$, $A_N$, $A^L$, $B$, $B_N$ and $B^L$ may be equal to $0$ or $+\infty$, so it does not necessarily hold that $d$ and $d_p$ are strongly equivalent on $\SpND$, $\SpLD$ or $\SpD$.

\begin{proof}
We only prove the last inequality, since the proof extends verbatim to the other two.
Pick any two persistence diagrams $\Dg,\Dg'\in\SpD$. 
Let $\Gamma=\{(p_i,q_i)\}_{i\in\N}$ be an optimal partial matching achieving $d_p(\Dg,\Dg')$, 
where $p_i$ (resp. $q_i$) is either in $\Dg$ (resp. $\Dg'$) or in $\pi_\Delta(\Dg')$ (resp. $\pi_\Delta(\Dg)$).
Given $n\in\N$, we define two sequences of persistence diagrams 
$\{\Dg_n\}_{n\in\N}$ and $\{\Dg'_n\}_{n\in\N}$ recursively with $\Dg_0=\Dg'_0=\emptyset$ and: 
\begin{align*}
\Dg_{n+1}&=\begin{cases} \Dg_n & \text{if }p_{n+1}\in\pi_\Delta(\Dg'), \\ 
\Dg_n\cup\{p_{n+1}\} &\text{otherwise,}\end{cases}
\\
\Dg'_{n+1}&=\begin{cases} \Dg'_n & \text{if }q_{n+1}\in\pi_\Delta(\Dg), \\ 
\Dg'_n\cup\{q_{n+1}\} &\text{otherwise.}\end{cases}
\end{align*}
%
%
Let us define 
\begin{align*}
l_n&=\max\{\max\{\|p\|_\infty:p\in \Dg_n\},\max\{\|q\|_\infty:q\in \Dg'_n\}\}, \\
s_n&=\max\{\card(\Dg_n),\card(\Dg'_n)\},
\end{align*}
%
Note that both $\{l_n\}_{n\in\N}$ and $\{s_n\}_{n\in\N}$ are nondecreasing. We have $\Dg_n,\Dg'_n\in\mathcal D_{s_n}^{l_n}$ and thus: 
\begin{equation}\label{eq:boundnl}
A_{s_n}^{l_n}\,d(\Dg_n,\Dg'_n)\leq d_p(\Dg_n,\Dg'_n)\leq B_{s_n}^{l_n}\,d(\Dg_n,\Dg'_n).
\end{equation}
%
Assuming $d_p(\Dg_n,\Dg'_n)\rightarrow d_p(\Dg,\Dg')$
\footnote{Note that this is always true if $d_p(\Dg,\Dg')<\infty$. 
Even though this is not clear if this assumption also holds in the general case, 
it is satisfied for the spaces of persistence diagrams defined in our subsequent results Lemma~\ref{lem:notsepar} and Proposition~\ref{prop:fmap}. }, 
it follows that $d(\Dg_n,\Dg'_n)\rightarrow d(\Dg,\Dg')$ by continuity of $d$.
We finally obtain the desired inequality
by letting $n\rightarrow+\infty$ in~(\ref{eq:boundnl}).
\end{proof}

A corollary of the previous results is that even if a feature map taking values in a separable Hilbert space might be bi-Lipschitz when restricted to $\SpNLD$,
the corresponding bounds have to go to 0 or $+\infty$ as soon as the domain of the feature map is not separable. 

\begin{thm}\label{thm:th1}
Let $\Phi:\mathcal D_{\Phi}\rightarrow \mathcal H$ be a feature map defined on a non-separable subspace $\mathcal D_{\Phi}$ of persistence diagrams
containing every $\SpNLD$, i.e., $\SpNLD\subset\mathcal D_{\Phi}$ for each $N,L$. Assume $\Phi$ takes values in a 
separable Hilbert space $\mathcal H$, and that $\Phi$ is bi-Lipschitz on each $\SpNLD$ with constants $A_N^L, B_N^L$.
Then either $A_N^L\rightarrow 0$ or $B_N^L\rightarrow +\infty$ when $N,L\rightarrow+\infty$.
\end{thm}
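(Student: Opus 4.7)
The plan is a contradiction argument resting on Lemma~\ref{lem:PDs} together with the separability machinery of Propositions~\ref{prop:separsub} and~\ref{prop:separ}.

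First, suppose for contradiction that $A_N^L \not\to 0$ and $B_N^L \not\to +\infty$ as $N,L\to\infty$. Since Lemma~\ref{lem:PDs} records that $A_N^L$ is nonincreasing in $N$ and $L$ while $B_N^L$ is nondecreasing, the monotone limits satisfy
\[ A := \lim_{N,L\to\infty} A_N^L > 0, \qquad B := \lim_{N,L\to\infty} B_N^L < +\infty. \]

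Second, I apply Lemma~\ref{lem:PDs} with the metric $d(\Dg,\Dg') := \|\Phi(\Dg) - \Phi(\Dg')\|_\HS$, which is defined on $\mathcal D_\Phi \supset \bigcup_{N,L}\SpNLD$. The bi-Lipschitz hypothesis on each $\SpNLD$ is precisely the input of Lemma~\ref{lem:PDs}, and the uniform bounds $A>0$ and $B<\infty$ deliver the continuity of $d$ with respect to $d_p$ needed on the various truncation levels. The truncation construction from the proof of Lemma~\ref{lem:PDs} accesses only diagrams in $\bigcup_{N,L}\SpNLD \subset \mathcal D_\Phi$, so it gives, for every $\Dg, \Dg' \in \mathcal D_\Phi$,
\[ A\,\|\Phi(\Dg) - \Phi(\Dg')\|_\HS \;\leq\; d_p(\Dg,\Dg') \;\leq\; B\,\|\Phi(\Dg) - \Phi(\Dg')\|_\HS. \]
Equivalently, $\Phi : (\mathcal D_\Phi, d_p) \to \HS$ is a bi-Lipschitz embedding with constants $1/B$ and $1/A$.

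Third, I invoke separability to derive the contradiction. Since $\HS$ is separable, Proposition~\ref{prop:separsub} gives that $\mathrm{im}(\Phi)$ is separable. Proposition~\ref{prop:separ} applied to the bi-Lipschitz embedding $\Phi$ then forces $(\mathcal D_\Phi, d_p)$ to be separable, contradicting the non-separability hypothesis on $\mathcal D_\Phi$. Therefore at least one of $A_N^L \to 0$ or $B_N^L \to +\infty$ must hold.

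The main obstacle is the second step: upgrading the per-level bi-Lipschitz control on each $\SpNLD$ to a global bi-Lipschitz bound on the entire non-separable domain $\mathcal D_\Phi$. A diagram in $\mathcal D_\Phi$ may have countably many points and so need not belong to any single $\SpNLD$; the remedy is the truncation-and-limit passage from the proof of Lemma~\ref{lem:PDs}, which requires only the uniform constants $A,B$ together with the continuity of $\Phi$ inherited from its bi-Lipschitzness on each $\SpNLD$. Once that passage is secured, the separability contradiction immediately closes the argument.
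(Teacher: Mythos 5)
Your proposal is correct and follows essentially the same route the paper intends: assume both limits are non-degenerate, apply Lemma~\ref{lem:PDs} with $d(\Dg,\Dg')=\|\Phi(\Dg)-\Phi(\Dg')\|_\HS$ to promote the per-level bounds to a global bi-Lipschitz embedding, and then derive a contradiction from Propositions~\ref{prop:separsub} and~\ref{prop:separ} against the non-separability of $\mathcal D_\Phi$. The only delicate point, the continuity hypothesis of Lemma~\ref{lem:PDs} for this choice of $d$ at diagrams outside $\bigcup_{N,L}\SpNLD$, is treated with the same level of care as in the paper itself, so no gap is introduced relative to the paper's argument.
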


Many feature maps defined in the literature, such as the Persistence Weighted Gaussian feature map~\cite{Kusano17} or the Landscape feature map~\cite{Bubenik15}, 
actually take value in the separable function space $L^2(\Omega)$, where $\Omega$ is the upper half-plane $\{(x,y)\,:\,x\leq y\}$.
Hence, to illustrate how Theorem~\ref{thm:th1} applies to these feature maps, we now provide two lemmata. In the first one,
we define a set $\mathcal S$ which is not separable with respect to $d_1$, and in the second one, we show that $\mathcal S$ is actually included in the 
domain $\mathcal D_{\Phi}$ of these feature maps. 

\begin{lem}\label{lem:notsepar}
Consider the sequence of points $\{p_k=(k, k + \frac 1k)\,:\,k\in\N\}$, and 
define the set $\mathcal S = \{\Dg_u\}_{u\in\mathcal U}\subset \SpD$, where $\mathcal U$ is the 
set of sequences with values in $\{0,1\}$, with: $\Dg_u=\{p_i\,:\,i\in{\rm supp}(u)\}$.
Then $(\mathcal S, d_1)$ is not separable.
\end{lem}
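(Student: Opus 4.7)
The plan is to exhibit an uncountable family of elements of $\mathcal{S}$ at pairwise $d_1$-distance bounded below by some fixed $\delta > 0$. Such a family immediately rules out separability, since the open balls of radius $\delta/2$ around its elements are pairwise disjoint and each must meet any countable dense subset.

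To build such a family, I would use a dyadic blocking construction: partition the positive integers into the blocks $I_n = \{2^n, 2^n+1, \ldots, 2^{n+1}-1\}$, and for each $\alpha \in \{0,1\}^\N$ define $u^\alpha \in \mathcal{U}$ by setting $u^\alpha_k = \alpha_n$ for all $k \in I_n$. The map $\alpha \mapsto u^\alpha$ is injective, so $\{\Dg_{u^\alpha}\}_\alpha$ is uncountable. The key property of this construction is that for distinct $\alpha, \beta$, if $n$ is the smallest index where they disagree, then every point $p_k$ with $k \in I_n$ lies in exactly one of the two diagrams $\Dg_{u^\alpha}, \Dg_{u^\beta}$.

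The lower bound on $d_1(\Dg_{u^\alpha}, \Dg_{u^\beta})$ then follows from an elementary matching analysis based on two simple facts: for distinct positive integers $k \neq j$, $\|p_k - p_j\|_\infty = |k - j| \geq 1$; and $\|p_k - \Delta\|_\infty = 1/(2k) \leq 1/2$. So in any partial matching $\Gamma$, a point that belongs to one diagram but not the other---whether matched to some point on the other side or sent to the diagonal---contributes at least $1/(2k)$ to $c_1(\Gamma)$. Applied to the $2^n$ points $\{p_k : k \in I_n\}$, which by construction all fall into this case,
\[
c_1(\Gamma) \;\geq\; \sum_{k \in I_n} \frac{1}{2k} \;>\; \frac{1}{2}\int_{2^n}^{2^{n+1}} \frac{dx}{x} \;=\; \frac{\ln 2}{2},
\]
so $d_1(\Dg_{u^\alpha}, \Dg_{u^\beta}) \geq \ln(2)/2$ for all $\alpha \neq \beta$, providing the desired $\delta$-separation with $\delta = \ln(2)/2$.

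The only potentially delicate point is the matching analysis, whose validity hinges on the block sizes growing fast enough that the per-point contributions $1/(2k)$, summed over $k \in I_n$, still exceed a uniform positive constant; this is precisely why the dyadic choice works, via the harmonic-series lower bound $\sum_{k \in I_n} 1/k > \ln 2$. The remaining steps---injectivity of $\alpha \mapsto u^\alpha$ and the standard disjoint-balls argument ruling out separability---are routine.
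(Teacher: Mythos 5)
Your proof is correct, and it takes a genuinely different route from the paper's. The paper argues by counting: it introduces the equivalence relation ``finite symmetric difference of supports'' on $\mathcal S$, notes that each class is countable while the set of classes is uncountable, and claims that any $\Dg_{u'}$ with $d_1(\Dg_u,\Dg_{u'})\leq\epsilon$ must lie in the class of $\Dg_u$ because an infinite symmetric difference $\mathcal I$ forces $d_1(\Dg_u,\Dg_{u'})=\sum_{i\in\mathcal I}\tfrac 1i=+\infty$. You instead exhibit an explicit uncountable $\delta$-separated family (the classical route for non-separability, as for $\ell_\infty$), with $\delta=\ln(2)/2$ coming from the dyadic blocks. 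Your approach buys something real here: the paper's key claim is only valid when the reciprocals indexed by the symmetric difference diverge, which fails for sparse index sets (e.g.\ $\mathcal I=\{2^n\}_{n}$ gives a finite, even small, distance despite $|\mathcal I|=\infty$; in particular the empty diagram $\epsilon$-approximates uncountably many elements of $\mathcal S$, so the paper's cardinality comparison $|\mathcal S'|\geq|\mathcal S/\sim|$ does not follow as written). Your dyadic blocking builds the divergence in by hand---the first disagreeing block $I_n$ contributes $\sum_{k\in I_n}\tfrac1{2k}>\tfrac{\ln 2}{2}$ uniformly in $n$---and your matching analysis (each $p_k$ lying in exactly one diagram contributes at least $\min\{1,\tfrac1{2k}\}=\tfrac1{2k}$ via a term of $c_1(\Gamma)$ not shared with any other such point, since all of them lie on the same side) is sound. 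So your argument not only proves the lemma but does so by a cleaner mechanism than the one the paper relies on.
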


\begin{proof}
First note that since the sequences $u\in \mathcal U$ can have infinite support, the spaces $\mathcal U$ and $\mathcal S = \{\Dg_u\}_{u\in\mathcal U}$ are not countable. 

Let $\sim$ be the equivalence relation on $\mathcal S$ defined with:
\[\Dg_u \sim \Dg_v \Longleftrightarrow {\rm supp}(u)\ \triangle\ {\rm supp}(v) < +\infty,\]
where $\triangle$ denotes the symmetric difference of sets.
Since the set of sequences with finite support is countable, it follows that each equivalence class $[\Dg_u]_{\sim}$ is countable as well.
In particular, this means that the set of equivalence classes $\mathcal S/\sim$ is uncountable, since otherwise $\mathcal S$ would be countable
as a countable union of countable equivalence classes.

We now prove the result by contradiction. Assume that $\mathcal S$ is separable, and let $\mathcal S'\subset\mathcal S$ be the corresponding dense countable subset of $\mathcal S$. 
Let $\epsilon > 0$. Then for each $u\in\mathcal U$, there is at least one sequence $u'\in\mathcal U$ such that $\Dg_{u'}\in\mathcal S'$ and
$d_1(\Dg_u,\Dg_{u'})\leq\epsilon$. We now claim that every such $u'$ satisfies $\Dg_{u'}\in[\Dg_u]_{\sim}$.
Indeed, assume $\Dg_{u'}\not\in[\Dg_u]_{\sim}$ and let ${\mathcal I} = {\rm supp}(u')\ \triangle\ {\rm supp}(u)$. 
Then, since $|\mathcal I|=+\infty$, we would have \[d_1(\Dg_u,\Dg_{u'})=\sum_{i\in\mathcal I}\frac 1i=+\infty > \epsilon,\] which is not possible.
Hence, this means that $|\mathcal S'| \geq |\mathcal S/\sim|$. However, we showed that $\mathcal S/\sim$ is uncountable, meaning that $\mathcal S'$ is uncountable as well,
which leads to a contradiction since $\mathcal S'$ is countable by assumption. 
\end{proof}

We now show that the Persistence Weighted Gaussian and the Landscape feature maps are well-defined on the set $\mathcal S$.
Let us first formally define these feature maps.

\begin{defin}\label{def:landscape}
Given $p =(u,v)\in\R^2$, $u\leq v$, let $\phi_p$ be the triangular function defined with $\phi_p(t)=\frac{v-u}{2}(1-\frac{2}{v-u}|t-\frac{u+v}{2}|)$ if $x\leq t\leq y$ and 0 otherwise.
Then, given a persistence diagram $\Dg$, let $\lambda_k:t\mapsto {\rm kmax}\{\phi_p(t)\}_{p\in\Dg}$, where kmax denotes the $k$-th largest element.
The {\em Landscape} feature map is defined as:
\[
\Phi_{\rm L}:\Dg\mapsto \bar{\lambda},
\quad
\text{where}
\quad
\bar{\lambda}(x,y)=
\begin{cases}
\lambda_{\lceil x \rceil}(y) & x \geq 0,\\
0 & \text{otherwise}.
\end{cases}\]
\end{defin}

\begin{defin}
Let $\omega:\R^2\rightarrow\R$ be a weight function and $\sigma >0$.
The {\em Persistence Weighted Gaussian} feature map is defined as:
\[
\Phi^\omega_{\rm PWG}:\Dg\mapsto \sum_{p\in\Dg} \omega(p){\rm e}^{-\frac{\|\cdot-p\|_2^2}{2\sigma^2}}.
\]

\end{defin}

\begin{prop}\label{prop:fmap}
Let $(x,y)\mapsto (y-x)^2$ be the weight function $(x,y)\mapsto(y-x)^2$.
Let $\mathcal S$ be the set of persistence diagrams defined in Lemma~\ref{lem:notsepar}. Then:
\[
\mathcal S \subset \mathcal D_{\Phi^\omega_{\rm PWG}}\text{ and }\mathcal S \subset \mathcal D_{\Phi_{\rm L}}.
\]
\end{prop}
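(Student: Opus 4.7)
The plan is to verify, for each $\Dg_u \in \mathcal{S}$ parametrized by $u \in \mathcal{U}$, that the formal expressions $\Phi^\omega_{\rm PWG}(\Dg_u)$ and $\Phi_{\rm L}(\Dg_u)$---which now involve possibly infinitely many points---actually define bona fide elements of $L^2(\Omega)$. In both cases the quantitative input is identical: the point $p_k = (k, k + 1/k)$ has persistence $1/k$, so $\omega(p_k) = 1/k^2$, and the series $\sum_k 1/k^\alpha$ converges for every $\alpha > 1$. The two feature maps will be treated separately since their definitions are of rather different analytic nature.

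For $\Phi^\omega_{\rm PWG}$, I would apply the triangle inequality in $L^2(\Omega)$ to the (a priori infinite) sum defining $\Phi^\omega_{\rm PWG}(\Dg_u)$. The key observation is that, by translation invariance of Lebesgue measure, the $L^2$-norm of the Gaussian $e^{-\|\cdot-p\|_2^2/(2\sigma^2)}$ does not depend on the center $p$; call this common value $C(\sigma)<\infty$. Hence
\[\|\Phi^\omega_{\rm PWG}(\Dg_u)\|_{L^2(\Omega)} \;\leq\; \sum_{k\in\mathrm{supp}(u)} \omega(p_k)\,C(\sigma) \;\leq\; C(\sigma)\sum_{k\geq 1}\frac{1}{k^2} \;<\; +\infty,\]
so the defining series converges absolutely in $L^2(\Omega)$ and $\Phi^\omega_{\rm PWG}(\Dg_u)\in L^2(\Omega)$ for every $u\in\mathcal U$.

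For $\Phi_{\rm L}$, I would exploit the geometric fact that the tent functions $\phi_{p_k}$ have essentially disjoint supports: $\phi_{p_k}$ is supported on $[k,\,k+1/k]$, and since $k+1/k\leq k+1$ for $k\geq 1$, distinct such supports meet in at most a single point. Consequently, at almost every $t$ at most one $\phi_{p_k}(t)$ is nonzero, so $\lambda_j\equiv 0$ almost everywhere for $j\geq 2$, while $\lambda_1$ coincides a.e.\ with the pointwise sum $\sum_k \phi_{p_k}$. A direct integration yields $\|\phi_{p_k}\|_{L^2}^2 = \tfrac{1}{12 k^3}$; unfolding the definition of $\bar\lambda$ then gives
\[\|\bar\lambda\|_{L^2}^2 \;=\; \sum_{j\geq 1}\|\lambda_j\|_{L^2}^2 \;=\; \|\lambda_1\|_{L^2}^2 \;=\; \sum_k \|\phi_{p_k}\|_{L^2}^2 \;\leq\; \sum_{k\geq 1}\frac{1}{12\,k^3} \;<\; +\infty,\]
hence $\Phi_{\rm L}(\Dg_u)\in L^2(\Omega)$ for every $u\in\mathcal U$.

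The argument is thus a pair of routine convergence estimates, with no substantive obstacle. The only place requiring a moment of care is the Landscape side: one must check that the essential disjointness of the supports $[k,\,k+1/k]$ is precisely what kills all $\lambda_j$ for $j\geq 2$, since otherwise the summation $\sum_j \|\lambda_j\|_{L^2}^2$ could in principle diverge even if each $\|\phi_{p_k}\|_{L^2}^2$ is small. This is exactly the reason for the specific choice of the points $p_k=(k,k+1/k)$ in Lemma~\ref{lem:notsepar}: the gaps between consecutive supports grow large enough to ensure disjointness, while the persistence values $1/k$ stay just large enough to produce an uncountable equivalence quotient under the $d_1$-relation but just small enough to make the relevant weighted series summable.
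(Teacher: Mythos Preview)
Your proof is correct and rests on the same two observations as the paper's: for $\Phi^\omega_{\rm PWG}$, the weights $\omega(p_k)=1/k^2$ make the series of Gaussians absolutely summable in $L^2$; for $\Phi_{\rm L}$, the tents $\phi_{p_k}$ have (essentially) disjoint supports, so only $\lambda_1$ survives and its $L^2$-norm is controlled by $\sum_k\|\phi_{p_k}\|_{L^2}^2$.

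The packaging differs slightly. The paper argues via Cauchy sequences: it fixes the increasing family $\Dg_{u_k}$ with support $\{1,\dots,k\}$ and shows $\|\Phi(\Dg_{u_q})-\Phi(\Dg_{u_p})\|_{L^2}\to 0$, which then dominates the tails for every $u\in\mathcal U$. For $\Phi^\omega_{\rm PWG}$ it expands the square and invokes the Gaussian convolution identity $\int e^{-\|x-p_k\|^2/2\sigma^2}e^{-\|x-p_l\|^2/2\sigma^2}\,dx=\pi\sigma^2 e^{-\|p_k-p_l\|^2/4\sigma^2}$ to bound the cross terms; for $\Phi_{\rm L}$ it uses the cruder estimate $\int\phi_k^2\le\int\phi_k=1/(4k^2)$. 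Your route is a bit more direct and more elementary on the PWG side: the triangle inequality plus translation invariance of the Gaussian $L^2$-norm avoids the convolution computation entirely, at the cost of giving a slightly looser constant. On the Landscape side you compute $\|\phi_{p_k}\|_{L^2}^2=1/(12k^3)$ exactly rather than bounding it; either is fine. Both arguments ultimately reduce to the convergence of $\sum 1/k^2$ (or $\sum 1/k^3$), so the difference is purely cosmetic.
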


\begin{proof}
Let $u_k\in\mathcal U$ be the sequence defined with $u_n=1$ if $n\leq k$ and $u_n=0$ otherwise. To show the desired result, it suffices 
to show that $\{\Phi^\omega_{\rm PWG}(\Dg_{u_k})\}_{k\in\mathbb N}$ and $\{\Phi_{\rm L}(\Dg_{u_k})\}_{k\in\mathbb N}$ are Cauchy sequences in $L^2(\R^2)$.
Let $q\geq p\geq 1$, and let us study $\|\Phi(\Dg_{u_q})-\Phi(\Dg_{u_p})\|^2_{L^2(\R^2)}$ for each feature map.

\begin{itemize} 
\item Case $\Phi^\omega_{\rm PWG}$. We have the following inequalities: 

\begin{align*}
\|\Phi^\omega_{\rm PWG}&(\Dg_{u_q})-\Phi^\omega_{\rm PWG}(\Dg_{u_p})\|^2_{L^2(\R^2)}\\
&=\int_{\R^2}\left(\sum_{k=p}^{q}\frac{1}{k^2} {\rm e}^{-\frac{\|x-p_k\|^2_2}{2\sigma^2}}\right)^2{\rm d}x
=\sum_{k=p}^q\sum_{l=p}^q \frac{1}{k^2l^2}\int_{\R^2}{\rm e}^{-\frac{\|x-p_k\|^2_2+\|x-p_l\|^2_2}{2\sigma^2}}{\rm d}x\\
&=\pi\sigma^2\sum_{k=p}^q\sum_{l=p}^q \frac{1}{k^2l^2}{\rm e}^{-\frac{\|p_k-p_l\|^2_2}{4\sigma^2}}\text{ (cf Appendix C in~\cite{Reininghaus14} for a proof of this equality)}\\
&\leq \pi\sigma^2\left(\sum_{k=p}^q \frac{1}{k^2}\right)\left(\sum_{l=p}^q \frac{1}{l^2}\right)
\end{align*}

The result simply follows from the fact that $\{\sum_{k=1}^n\frac{1}{k^2}\}_{n\in\mathbb N}$ is convergent and Cauchy. 

\item Case $\Phi_{\rm L}$. Since all triangular functions, as defined in Definition~\ref{def:landscape}, 
have disjoint support, it follows that 
the only non-zero lambda function is 
$\lambda_1=\sum_{n=1}^k \phi_n$, where $\phi_n$ is a triangular function defined with
$\phi_n(t)=\frac{1}{2n}(1-|2n(t-(n+\frac{1}{2n}))|)$ if $n\leq t\leq n+\frac 1n$ and 0 otherwise. See Figure~\ref{fig:landscape}.

\begin{figure}[h]\centering
\includegraphics[width=13cm]{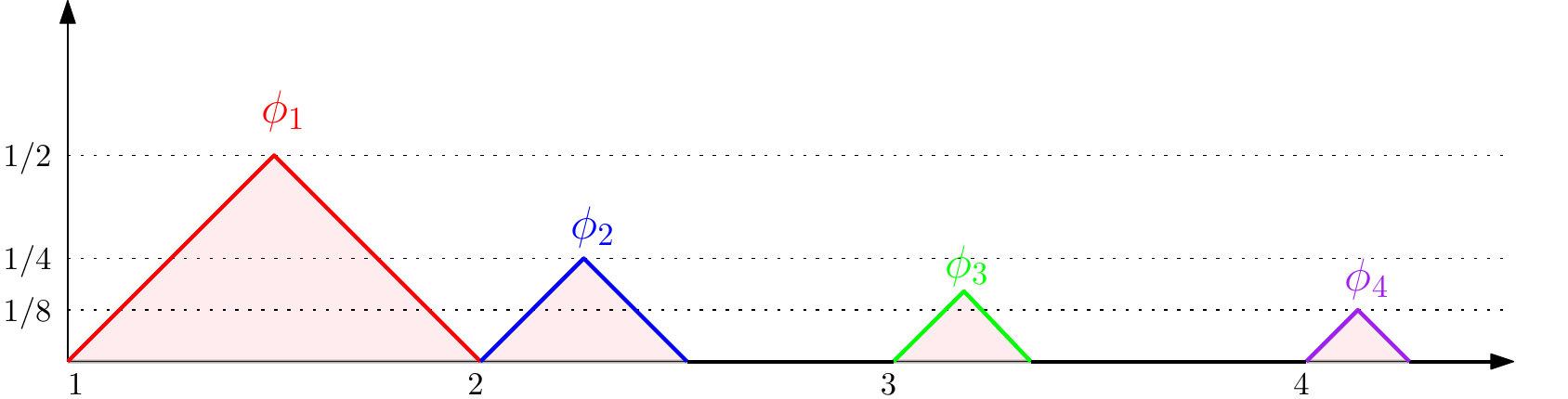}
\caption{\label{fig:landscape} Image of $\Dg_{u_4}$ under $\Phi_{\rm L}$.}
\end{figure}

Hence, we have the following inequalities:

\begin{align*}
\|\Phi_{\rm L}&(\Dg_{u_q})-\Phi_{\rm L}(\Dg_{u_p})\|^2_{L^2(\R^2)}\\
&=\int_{\R}\left(\sum_{k=p}^{q}\phi_k(x)\right)^2{\rm d}x
=\sum_{k=p}^q\sum_{l=p}^q \int_{\R}\phi_k(x)\phi_l(x){\rm d}x\\
&=\sum_{k=p}^q\int_{\R}\phi_k(x)^2{\rm d}x
\leq\sum_{k=p}^q\int_{\R}\phi_k(x){\rm d}x
=\sum_{k=p}^q\frac{1}{4k^2}
\end{align*}
Again, the result follows from the fact that $\{\sum_{k=1}^n\frac{1}{k^2}\}_{n\in\mathbb N}$ is convergent and Cauchy. 
\qedhere
\end{itemize}
\end{proof}

Proposition~\ref{prop:fmap} shows that Theorem~\ref{thm:th1} applies (with the metric $d_1$ between persistence diagrams) 
to the Persistence Weighted Gaussian feature map with weight function $(x,y)\mapsto (y-x)^2$---actually, any
weight function that is equivalent to $(y-x)^2$ when $(x,y)$ goes to 0---and the Landscape feature map. In particular, 
any lower bound for these maps has to go to 0 when $N,L\rightarrow+\infty$ since an upper bound exists for these maps due to their stability properties---see 
Corollary 15 in~\cite{Bubenik15} and Proposition 3.4 in~\cite{Kusano17}.

\section{Mapping into finite-dimensional Hilbert spaces}

In our second main result, we show that more can be said about feature maps into $\R^n$ (equipped with the Euclidean metric), 
using the so-called {\em Assouad dimension}. This involves all vectorization methods for persistence diagrams that we described in the related work.

\paragraph*{Assouad dimension.}
The following definition and example are taken from paragraph 10.13 of~\cite{Heinonen01}. 

\begin{defin}
Let $(X,d_X)$ be a metric space. Given a subset $E\subset X$ and $r>0$, let $N_r(E)$ be the least number of open balls 
of radius less than or equal to $r$ that can cover $E$. 
The {\em Assouad dimension} of $X$ is:
$$
\dimA(X,d_X)={\rm inf}\{\alpha>0\,:\,\exists C >0\text{  s.t. } 
{\rm sup}_{x\in X}N_{\beta r}(B(x,r))\leq C\beta^{-\alpha},\ \forall r>0,\beta\in(0,1]\}.
$$

\end{defin} 
Intuitively, the Assouad dimension measures the number of open balls needed to cover an open ball of larger radius.
For example, the Assouad dimension of $\R^n$ is $n$. Moreover, the Assouad dimension is preserved by bi-Lipschitz embeddings.



\begin{prop}[Lemma 9.6 in~\cite{Robinson10}]\label{prop:Assouadineq}
Let $(X,d_X)$ and $(Y,d_Y)$ be metric spaces with a bi-Lipschitz embedding $\Phi:X\rightarrow Y$. 
Then $\dimA(X,d_X) = \dimA({\rm im}(\Phi),d_Y)$. 
\end{prop}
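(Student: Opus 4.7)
The plan is to exploit the symmetry in the bi-Lipschitz inequality and show that one inequality $\dimA(\text{im}(\Phi), d_Y) \leq \dimA(X, d_X)$ suffices; the other then follows by applying the same argument to the inverse map $\Phi^{-1}:\text{im}(\Phi)\to X$, which is itself bi-Lipschitz with constants $1/B$ and $1/A$ between $(\text{im}(\Phi), d_Y|_{\text{im}(\Phi)})$ and $(X,d_X)$.

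First I would record the two basic inclusions that follow directly from $A\,d_X(x,x')\leq d_Y(\Phi(x),\Phi(x'))\leq B\,d_X(x,x')$: for any $x\in X$ and $s>0$,
\[
\Phi(B_X(x,s)) \subseteq B_Y(\Phi(x),Bs)\cap\text{im}(\Phi), \qquad
\Phi^{-1}\bigl(B_Y(\Phi(x),s)\cap\text{im}(\Phi)\bigr) \subseteq B_X(x,s/A).
\]
These allow me to transport a cover of a ball in $X$ to a cover of a ball in $\text{im}(\Phi)$ and vice versa, while controlling how radii scale.

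Next, fix any $\alpha>\dimA(X,d_X)$; then there is a constant $C>0$ such that every ball $B_X(x,r)$ can be covered by at most $C\beta^{-\alpha}$ balls of radius $\beta r$ for every $\beta\in(0,1]$. Given $z=\Phi(x)\in\text{im}(\Phi)$, $r>0$, and $\beta'\in(0,1]$, the second inclusion above gives $B_Y(z,r)\cap\text{im}(\Phi)\subseteq \Phi(B_X(x,r/A))$. I would cover $B_X(x,r/A)$ by at most $C\beta^{-\alpha}$ balls of radius $\beta(r/A)$ in $X$, choosing $\beta = A\beta'/B$ (which lies in $(0,1]$ since $A\leq B$ and $\beta'\leq 1$). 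Pushing this cover through $\Phi$ and applying the first inclusion, each image sits in a ball of radius $B\beta(r/A)=\beta' r$ in $\text{im}(\Phi)$. The number of such balls is
\[
C\beta^{-\alpha}=C\left(\frac{B}{A}\right)^{\alpha}(\beta')^{-\alpha}.
\]
Setting $C'=C(B/A)^{\alpha}$ gives the Assouad covering condition for $(\text{im}(\Phi),d_Y)$ at the same exponent $\alpha$. Taking infimum over $\alpha>\dimA(X,d_X)$ yields $\dimA(\text{im}(\Phi),d_Y)\leq \dimA(X,d_X)$, and the reverse inequality follows by symmetry.

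There is no real obstacle here beyond careful bookkeeping of the constants; the essential point is that a bi-Lipschitz map distorts radii only by a factor bounded by $B/A$, so the exponent $\alpha$ in the covering bound is preserved and only the prefactor $C$ changes. The mild asymmetry between the ball in $Y$ and the ball in $\text{im}(\Phi)$ (the latter being the intersection with the image) is harmless because only the intersection matters once we restrict to $(\text{im}(\Phi),d_Y|_{\text{im}(\Phi)})$.
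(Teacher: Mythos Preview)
Your argument is correct and is the standard proof of this fact. Note, however, that the paper does not actually supply a proof of this proposition: it is stated as a citation (Lemma~9.6 in~\cite{Robinson10}) and used as a black box. So there is no ``paper's own proof'' to compare against; what you have written is precisely the elementary covering argument one would expect, and it would serve perfectly well if the paper chose to include a proof rather than a reference.
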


\paragraph*{Non-embeddability.}
We now show that $\SpNLD$ cannot be embedded into $\R^n$ with bi-Lipschitz embeddings.
The proof of this fact is a consequence of the following lemma:

\begin{lem}\label{lem:Assouadpersistence diagrams}
Let $p\in\N\cup\{\infty\}$, $N\in\N$, and $L>0$.
Then
$\dimA(\SpNLD,d_p)=+\infty$.
\end{lem}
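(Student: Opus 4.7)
The plan is to show that arbitrarily many pairwise $d_p$-separated persistence diagrams fit into arbitrarily small balls around the empty diagram $\emptyset$, which forces the Assouad dimension to be infinite. The key observation is that a single-point diagram near the diagonal has $d_p$-distance to $\emptyset$ equal to half its persistence, while two such diagrams located far apart along the diagonal remain pairwise $d_p$-separated on the order of the persistence itself (because matching their points to each other costs $|a-a'|$, while matching both to the diagonal costs a constant multiple of $r/2$). This lets us pack $\Omega(L/r)$ pairwise-separated diagrams inside $B(\emptyset, r)$, a number growing unboundedly as $r \to 0$, which is incompatible with finite Assouad dimension.

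Concretely, assuming $N \geq 2$ (so $\SpNLD$ strictly contains $\emptyset$), I would fix $r \in (0, L)$ and consider, for each $a \in [-L, L-r]$, the one-point diagram $\Dg_a = \{(a, a+r)\}$. Since the $\ell_\infty$-distance from $(a,a+r)$ to $\Delta$ equals $r/2$, we have $d_p(\Dg_a, \emptyset) = r/2 < r$, so $\Dg_a \in B(\emptyset, r)$. The only two candidate partial matchings between $\Dg_a$ and $\Dg_{a'}$ give costs $|a-a'|$ (pairing the points) and $2^{1/p}(r/2)$ (sending both to the diagonal, with the convention $2^{1/\infty} := 1$), so
\[ d_p(\Dg_a, \Dg_{a'}) = \min\bigl(|a-a'|,\, 2^{1/p}(r/2)\bigr) \geq r/2 \quad \text{as soon as} \quad |a-a'| \geq r/2. \]

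Choosing $a_1, \dots, a_M$ to be $(r/2)$-separated in $[-L, L-r]$ then yields $M = \Omega(L/r)$ diagrams in $B(\emptyset, r)$ pairwise $d_p$-separated by $r/2$. By the triangle inequality, no open ball of radius $r/4$ can contain two of them, so $N_{r/4}(B(\emptyset, r)) \geq M$. Setting $\beta = 1/4$, the right-hand side grows without bound as $r \to 0^+$, while the condition $\dimA(\SpNLD, d_p) \leq \alpha$ would force $N_{\beta r}(B(\cdot, r)) \leq C\cdot 4^\alpha$ to be uniformly bounded in $r$ for some constant $C$. No finite $\alpha$ can therefore serve, giving $\dimA(\SpNLD, d_p) = +\infty$.

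The main (and really the only) subtlety is harmonizing $p < \infty$ and $p = \infty$, which the inequality $2^{1/p}(r/2) \geq r/2$ handles uniformly; beyond that, the argument is a direct packing exploiting that translation along the diagonal is essentially free for diagrams whose points all lie close to it, so that the one-dimensional packing number of $[-L, L]$ at scale $r/2$ is inherited by $B(\emptyset, r)$ and diverges as $r$ shrinks.
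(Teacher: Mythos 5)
Your proof is correct and follows essentially the same strategy as the paper's: pack $\Omega(L/r)$ one-point diagrams with persistence $r$, spaced along the diagonal, into the ball $B_p(\emptyset,r)$, and observe that their pairwise separation of order $r/2$ (coming from the diagonal-matching cost $2^{1/p}\,r/2$) forces the covering number at scale $\beta r$ to exceed any bound of the form $C\beta^{-\alpha}$. The only differences are cosmetic (you fix $\beta=1/4$ and let $r\to 0$, the paper fixes $\beta=1/2$ and chooses $r=2L/M$), and your explicit caveat that $N\geq 2$ is needed is a reasonable observation that the paper leaves implicit.
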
 

\begin{proof}
Let $B_p$ 
denote an open ball with $d_p$.
We want to show that, for any $\alpha>0$ and $C>0$, it is possible to find a persistence diagram $\Dg\in\SpNLD$, a radius $r>0$ and 
a factor $\beta\in(0,1]$ such that the number of open balls of radius at most $\beta r$ needed to cover $B_p(\Dg,r)$ 
is strictly larger than $C\beta^{-\alpha}$. 
To this end, we pick arbitrary $\alpha>0$ and $C>0$.
The idea of the proof is to define
$\Dg$ as the empty diagram, and to derive a lower bound on the number of balls with radius $\beta r$ needed to cover $B_p(\Dg,r)$
by considering persistence diagrams with one point evenly distributed on the line $\{(x,x+r):x\in[-L,L]\}$ 
such that the distance between two consecutive points is $r$ in the $\ell_\infty$-distance. Indeed, the pairwise distance between any two such persistence diagrams is
sufficiently large so that they must belong to different balls.
Then we can control the number of persistence diagrams, and thus the number of balls,
by taking $r$ sufficiently small.

More formally, let $M=1+\lfloor C\beta^{-\alpha}\rfloor> C\beta^{-\alpha}$. 
We want to show that we have at least $M$ balls in the cover, meaning that $|\{\Dg_i\}|\geq M$. 
Let $r=2L/M$ and $\beta=\frac 12$. We define a cover of $B_p(\Dg,r)$ with open balls of radius less than $\beta r$
centered on a family $\{\Dg_i\}$ as follows:
\begin{equation}\label{eq:cover}
B_p(\Dg,r)\subseteq \bigcup_{i} B_p(\Dg_i,\beta r).
\end{equation}



We now define particular persistence diagrams which all lie in different elements of the cover~(\ref{eq:cover}).
For any $0\leq j \leq M-1$,
we let $\Dg'_j$ denote the persistence diagram containing only the point $(-L+jr,-L+(j+1)r)$. It is clear that each $\Dg'_j$ is in $\SpNLD$.
See Figure~\ref{fig:proofAssouad}.

\begin{figure}
\centering
\includegraphics[width=5cm]{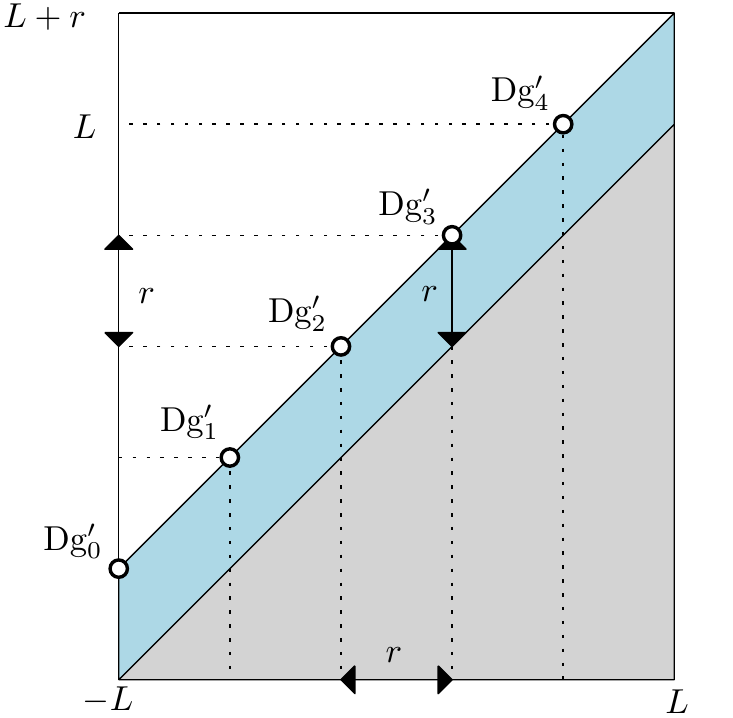}
\caption{\label{fig:proofAssouad} Persistence diagram used in the proof of Lemma~\ref{lem:Assouadpersistence diagrams}. In this particular example, we have $M=5$.}
\end{figure}



Moreover, since
$d_p(\Dg,\Dg'_j)= \frac r2<r$, it also follows that $\Dg'_j\in B_p(\Dg,r)$.
 

Hence, according to~(\ref{eq:cover}), for each $j$ there exists an integer $i_j$  such that $\Dg'_j\in B_p(\Dg_{i_j},\beta r)$.
Finally, note that $j\neq j'\Rightarrow i_j\neq i_{j'}$. Indeed, assuming that there are $j\neq j'$ such that
$i_j=i_{j'}$, and since the distance between $\Dg'_j$ and $\Dg'_{j'}$ is always obtained by matching their points to the diagonal, 
we reach a contradiction with the following application of the triangle inequality:
$$
d_p(\Dg'_j,\Dg'_{j'})=2^{\frac 1p}\frac  r2 
\leq d_p(\Dg'_j,\Dg_{i_j})+d_p(\Dg_{i_j},\Dg_{i_{j'}})+d_p(\Dg_{i_{j'}},\Dg'_{j'}) 
<2\beta r =r.
$$
This observation shows that there are at least $M$
different open balls in the cover~(\ref{eq:cover}), which concludes the proof.
\end{proof}

The following theorem is then a simple consequence of Lemma~\ref{lem:Assouadpersistence diagrams}  and Proposition~\ref{prop:Assouadineq}:

\begin{thm}\label{th:nonembedRn}
Let $p\in\N\cup\{\infty\}$ and $n\in\N$. 
Then, for any $N\in\N$ and $L>0$, there is no bi-Lipschitz embedding between 
$(\SpNLD,d_p)$ and $\R^n$.
\end{thm}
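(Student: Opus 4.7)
The plan is to reduce the theorem to a short contradiction argument using the two already-established ingredients: Lemma~\ref{lem:Assouadpersistence diagrams}, which computes $\dimA(\SpNLD,d_p)=+\infty$, and Proposition~\ref{prop:Assouadineq}, which says that bi-Lipschitz embeddings preserve the Assouad dimension. Suppose for contradiction that there exists a bi-Lipschitz embedding $\Phi:(\SpNLD,d_p)\to\R^n$. Then Proposition~\ref{prop:Assouadineq} gives
\[
\dimA(\SpNLD,d_p)=\dimA(\im(\Phi),\|\cdot\|_2).
\]
The right-hand side is at most $\dimA(\R^n,\|\cdot\|_2)=n$, which combined with Lemma~\ref{lem:Assouadpersistence diagrams} yields $+\infty\le n$, a contradiction.

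The only point that requires a brief justification is the monotonicity of the Assouad dimension under taking subsets, i.e.\@ that $\im(\Phi)\subset\R^n$ has Assouad dimension at most $n$. I would handle this with a one-line argument directly from the definition: given $\alpha>n$ and the associated constant $C$ valid in $\R^n$, for any $x\in\im(\Phi)$ and any $r>0,\beta\in(0,1]$, cover $B(x,r)$ in $\R^n$ by $N\le C(\beta/2)^{-\alpha}$ balls of radius $\beta r/2$; for each of these balls intersecting $\im(\Phi)$, replace the center by a point of $\im(\Phi)$ in the intersection, so the resulting ball (now in $\im(\Phi)$) of radius $\beta r$ still contains the original intersection. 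This yields a cover of $B(x,r)\cap \im(\Phi)$ by at most $C\cdot 2^\alpha\,\beta^{-\alpha}$ balls of radius $\beta r$ in $\im(\Phi)$, so $\dimA(\im(\Phi),\|\cdot\|_2)\le\alpha$; letting $\alpha\downarrow n$ concludes.

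There is essentially no hard step here: the substantive work is entirely absorbed into Lemma~\ref{lem:Assouadpersistence diagrams}. The only thing one must be careful about is the distinction between balls with centers required to lie in the subspace versus balls inherited from the ambient space, which is handled by the factor-of-two trick above. Once that is dispatched, the result follows in a single line from the two cited results.
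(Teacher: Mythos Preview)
Your proposal is correct and follows exactly the paper's approach: the theorem is stated there as ``a simple consequence of Lemma~\ref{lem:Assouadpersistence diagrams} and Proposition~\ref{prop:Assouadineq}'', and your contradiction argument spells this out. Your extra justification of the monotonicity of Assouad dimension under subsets is a nice touch that the paper leaves implicit.
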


Interestingly, the integers $N$ and $n$ are independent in Theorem~\ref{th:nonembedRn}: even if one restricts to persistence diagrams
with only one point, it is still impossible to find a bi-Lipschitz embedding into $\R^n$, whatever $n$ is.

\section{Experiments}
\label{sec:expe}

In this section, we illustrate our main results by computing the lower metric distortion bounds for the main stable feature maps in the literature. We use
persistence diagrams with increasing number of points to experimentally observe the convergence of this bound to 0, as described in Theorem~\ref{thm:th1}.
More precisely, we generate 100 persistence diagrams for each cardinality in a range going from 10 to 1000 by uniformly sampling points in the unit 
upper half-square $\{(x,y)\,:\,0\leq x,y \leq 1, x\leq y\}$. See Figure~\ref{fig:exPDs} for an illustration.

\begin{figure}[h]\centering
\includegraphics[width=5cm]{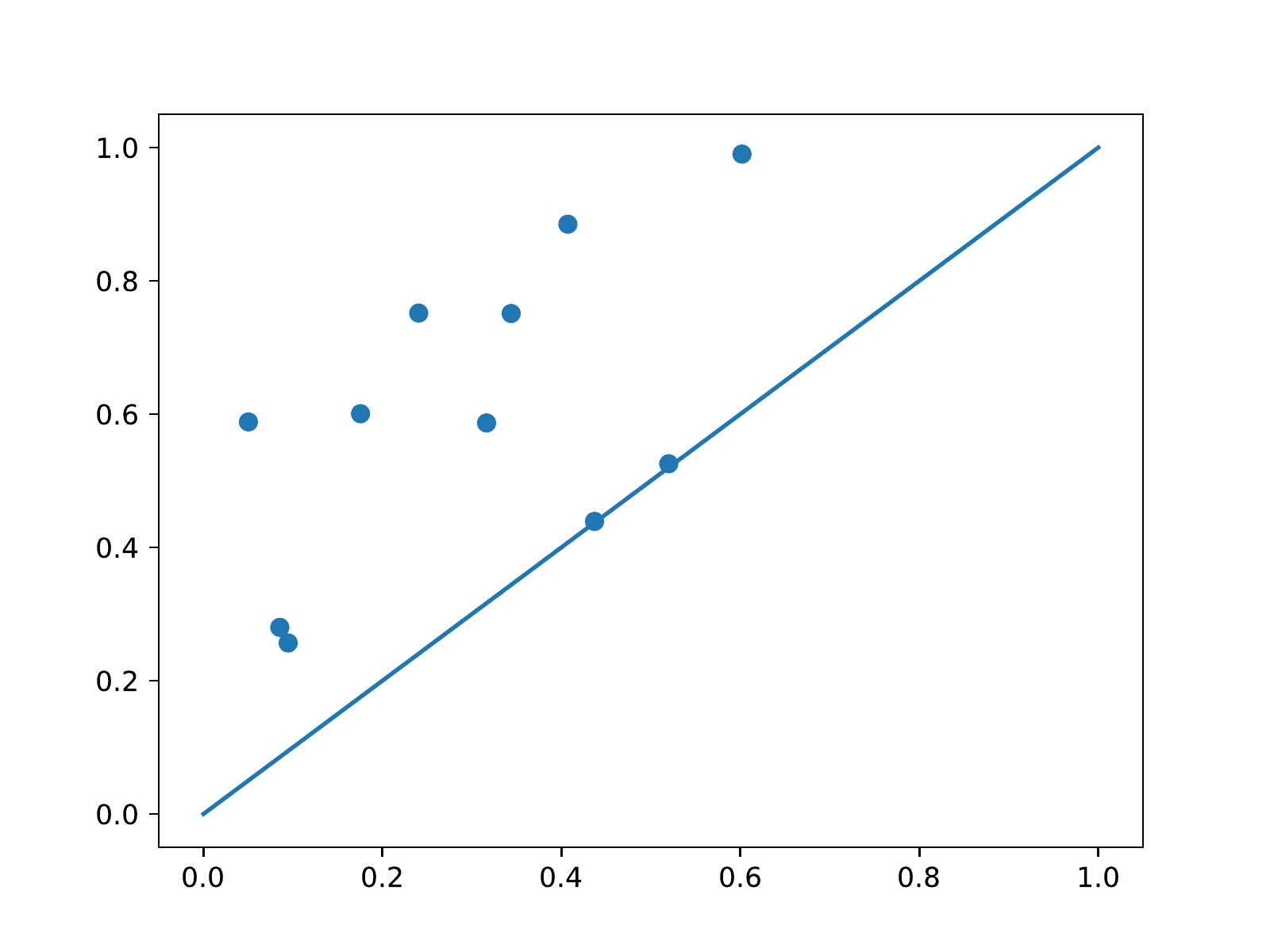}
\includegraphics[width=5cm]{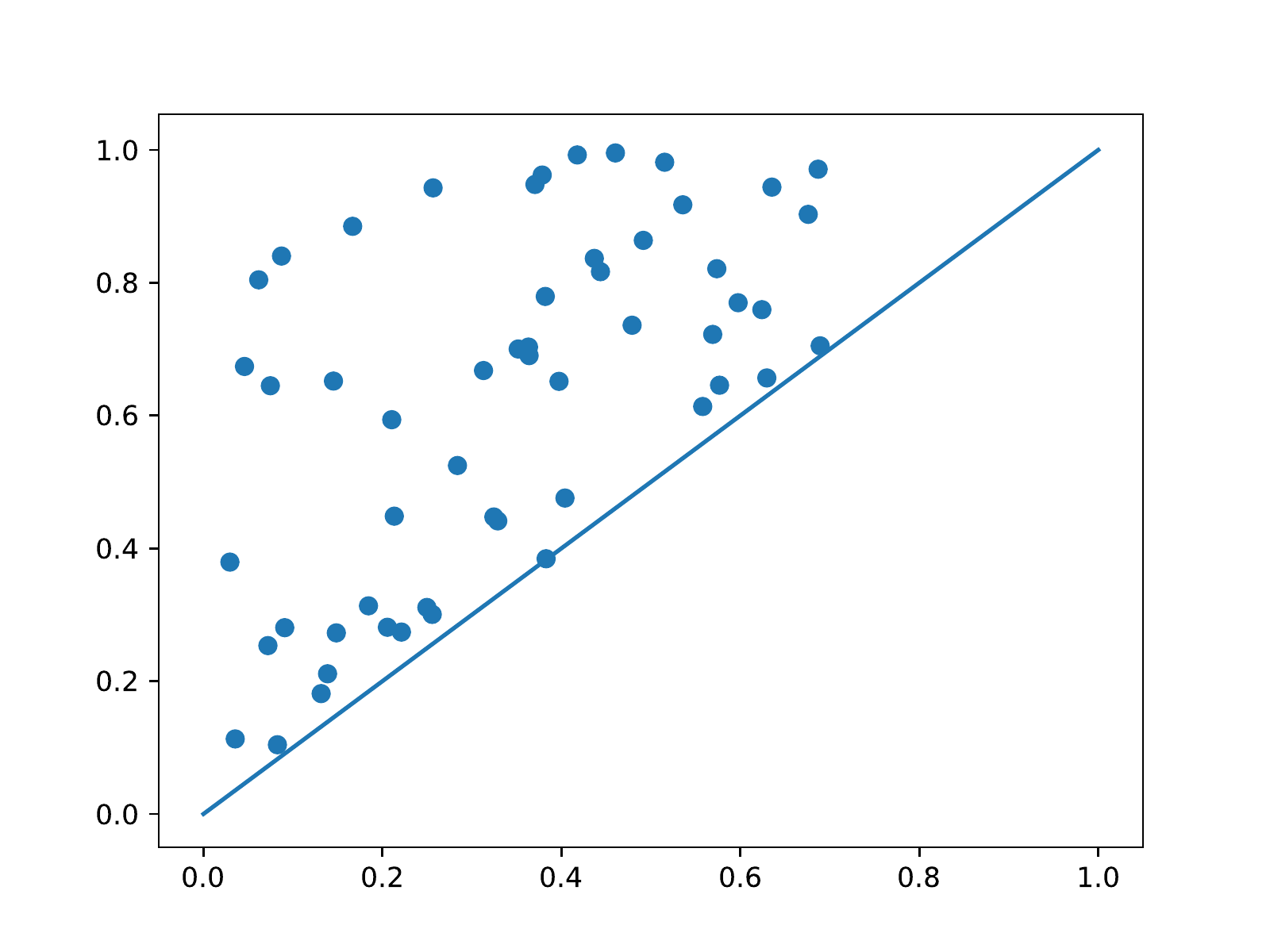}
\includegraphics[width=5cm]{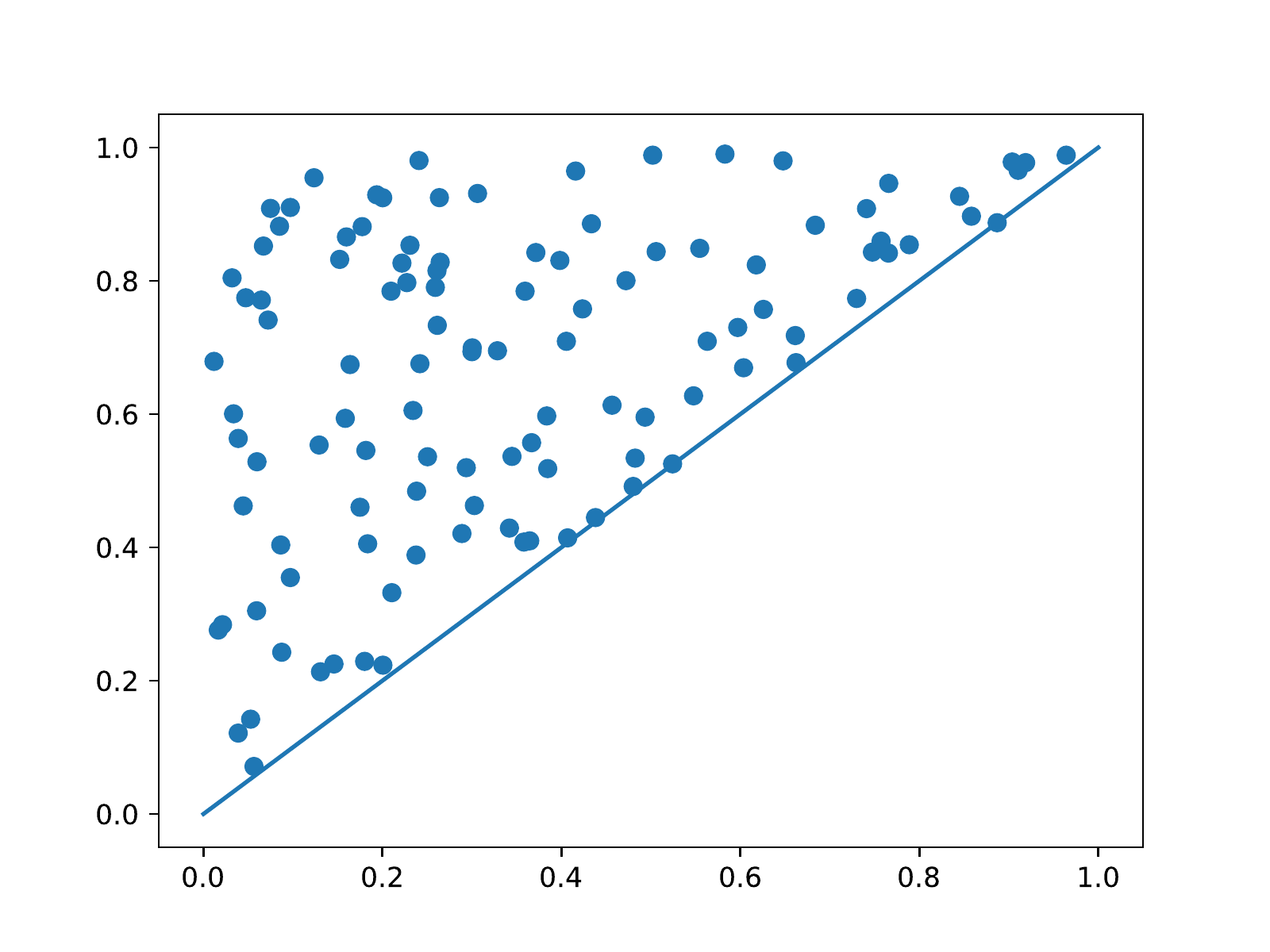}
\caption{\label{fig:exPDs} Example of synthetic persistence diagrams with cardinalities 10 (left), 60 (middle), and 100 (right) generated for the experiment.}
\end{figure}

Then, we consider the following feature maps: 

\begin{itemize}
\item the Persistence Weighted Gaussian with unit bandwidth (PWG)~\cite{Kusano17}, 
\item the Persistence Scale Space with unit bandwidth (PSS)~\cite{Reininghaus15},
\item the Landscape (LS)~\cite{Bubenik15}, 
\item the Persistence Image with resolution $10 \times 10$ and unit bandwidth (IM)~\cite{Adams17} 
\item the Topological Vector with 10 dimensions (TV)~\cite{Carriere15a},
\end{itemize}

Since most of these feature maps enjoy stability properties with respect to the first diagram distance $d_1$, we compute 
the ratios between the metrics in the Hilbert spaces corresponding to these feature maps and $d_1$.
Moreover, we also look at the ratio induced by the square root of the Sliced Wasserstein distance (SW)~\cite{Carriere17e}, as suggested by Remark~\ref{rmk:SW}. 
All feature maps were computed with the sklearn-tda library\footnote{\url{https://github.com/MathieuCarriere/sklearn_tda}}, 
which uses Hera\footnote{\url{https://bitbucket.org/grey_narn/hera}}~\cite{Kerber17} as backend 
to compute the first diagram distances $d_1$ between pairs of persistence diagrams.
These ratios are then displayed as boxplots in Figure~\ref{fig:boxplots}.

\begin{figure}[h!]\centering
\includegraphics[width=6.75cm]{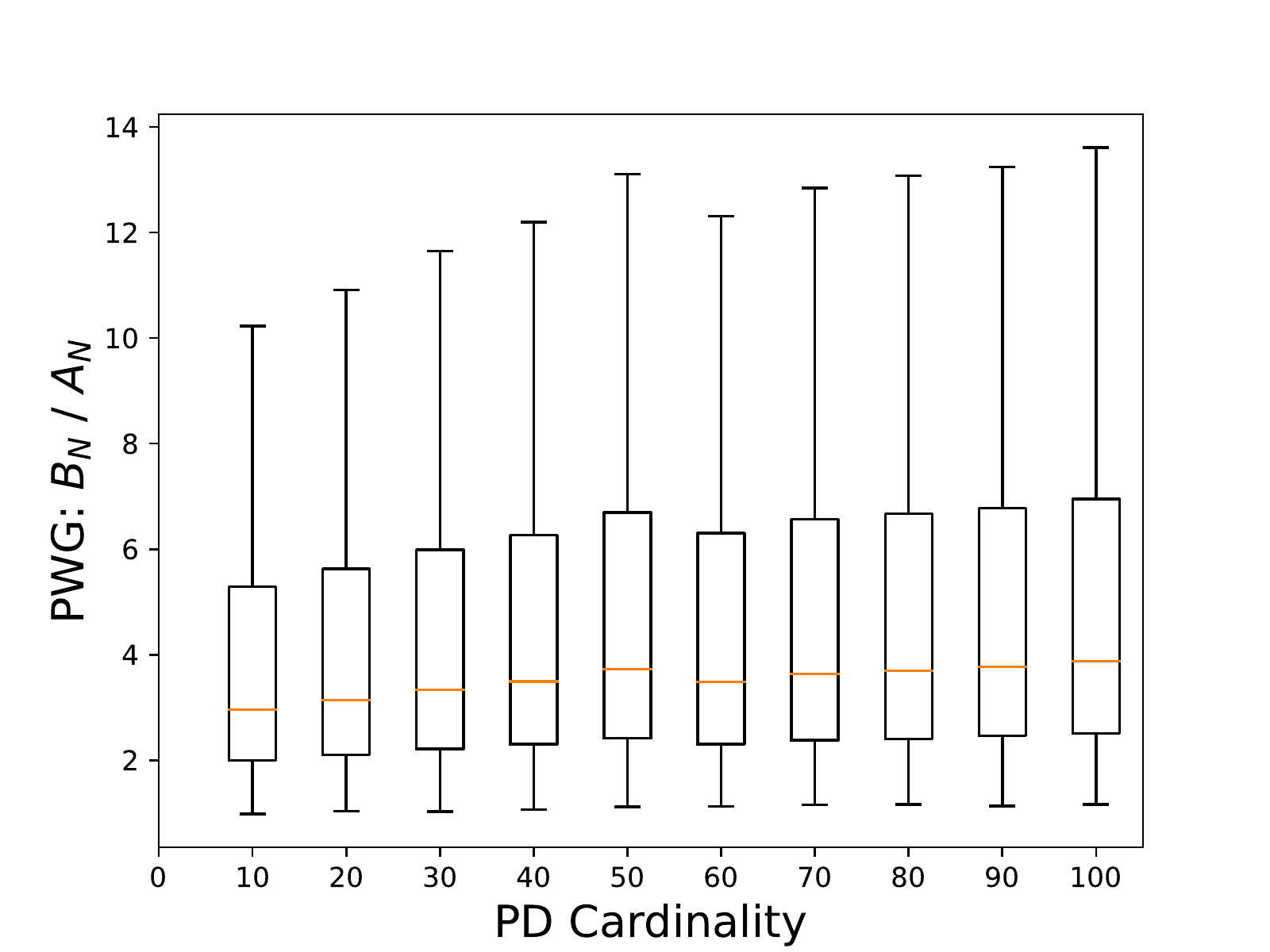}
\includegraphics[width=6.75cm]{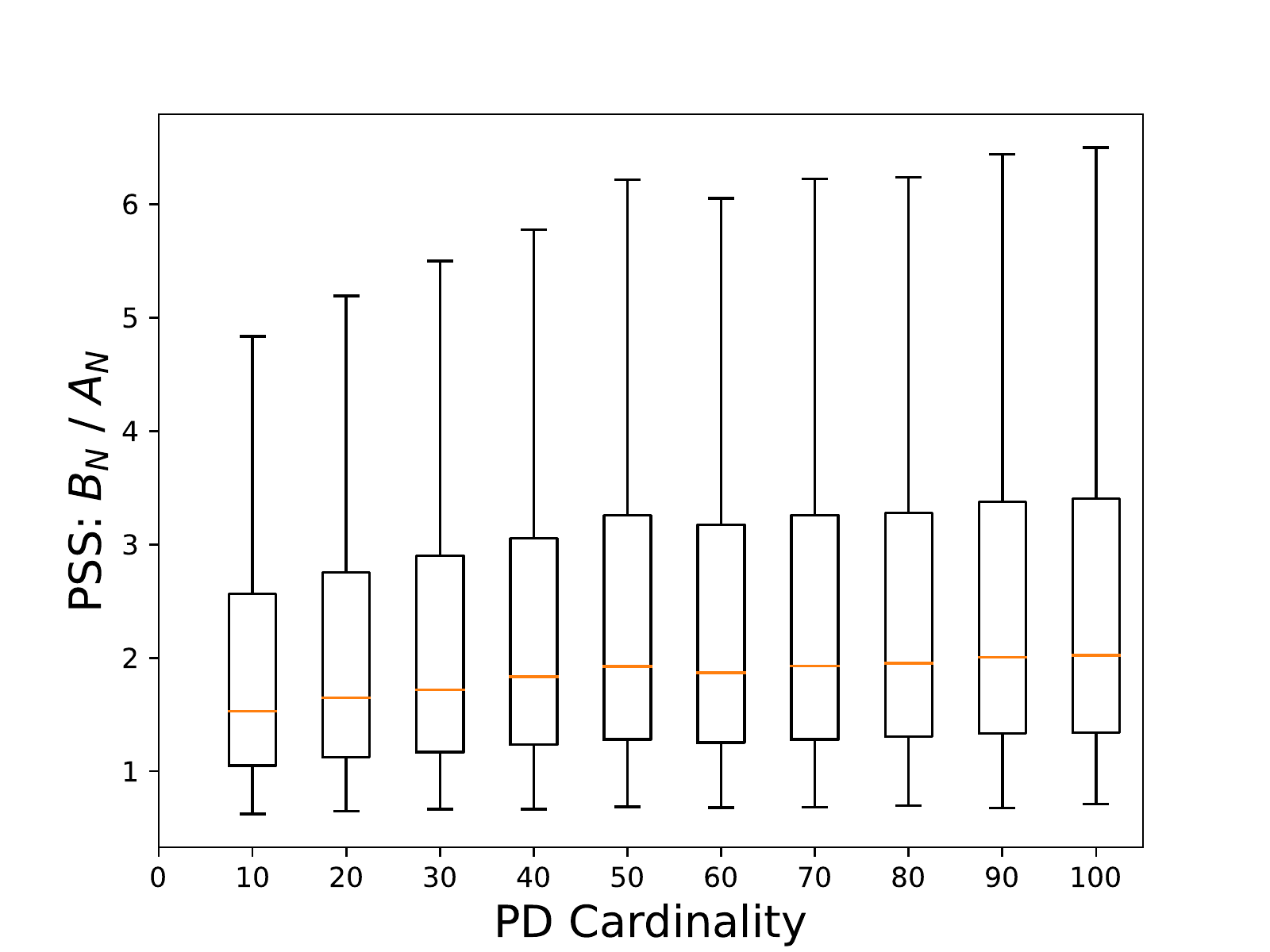}
\includegraphics[width=6.75cm]{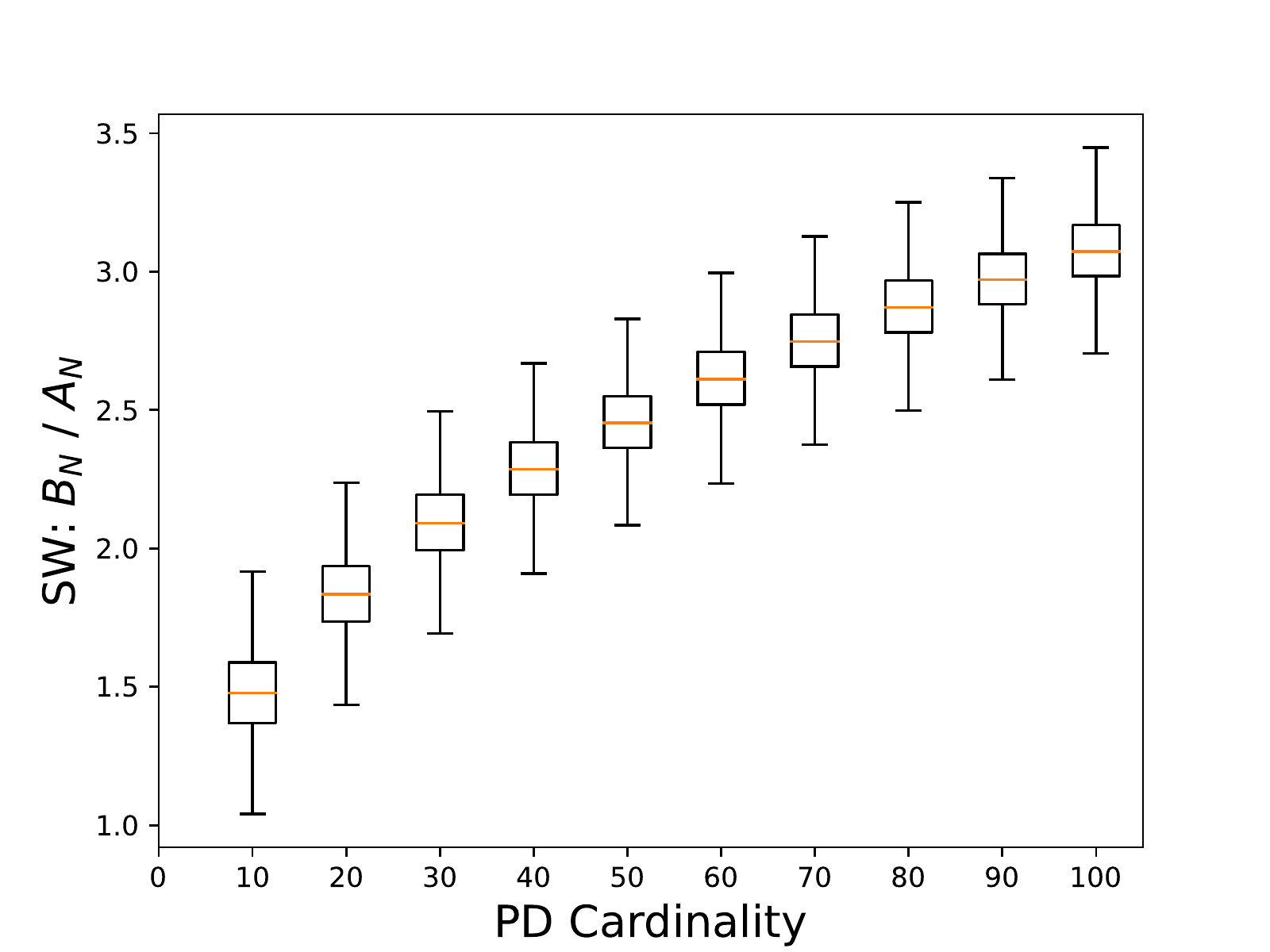}
\includegraphics[width=6.75cm]{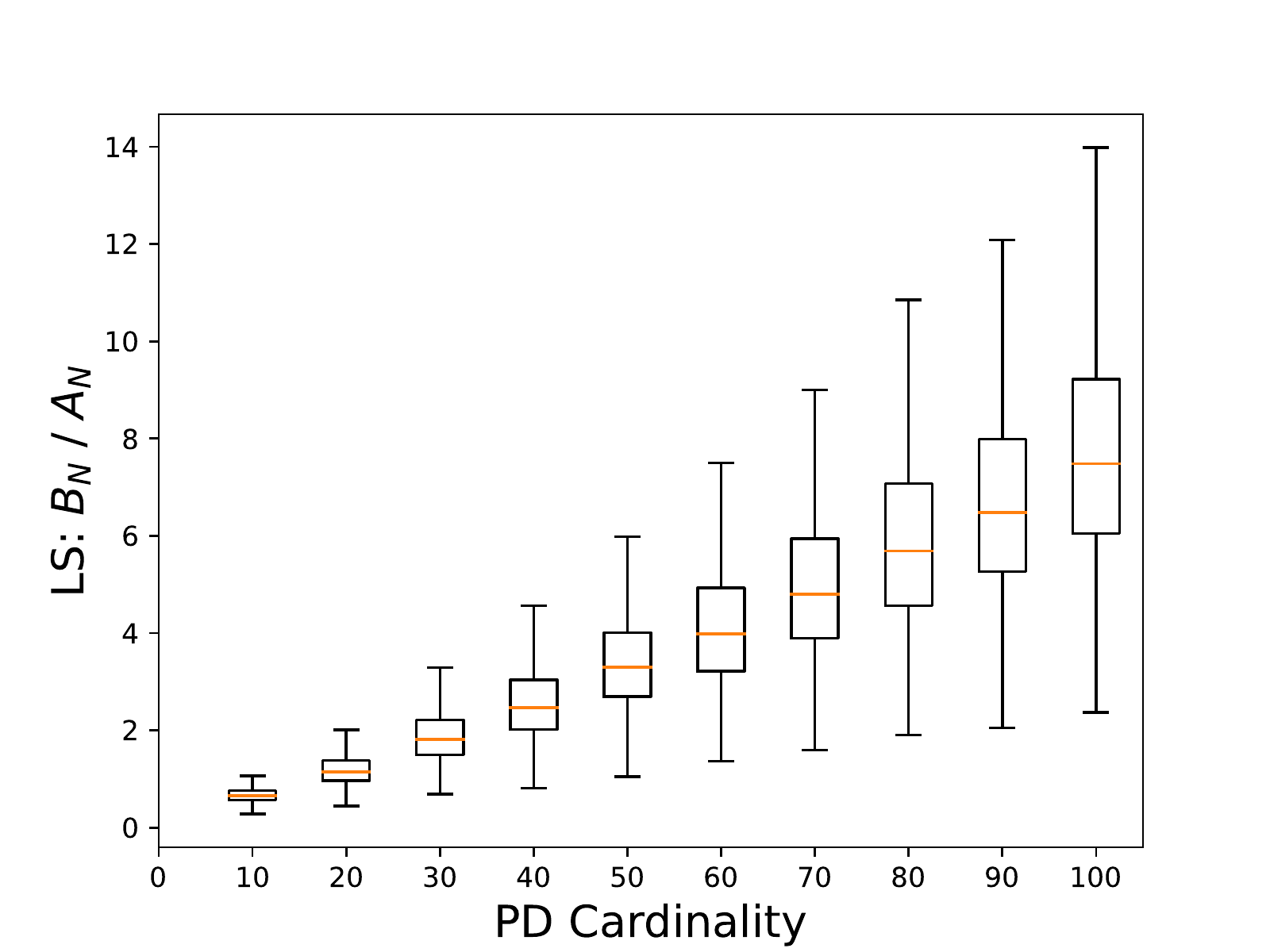}
\includegraphics[width=6.75cm]{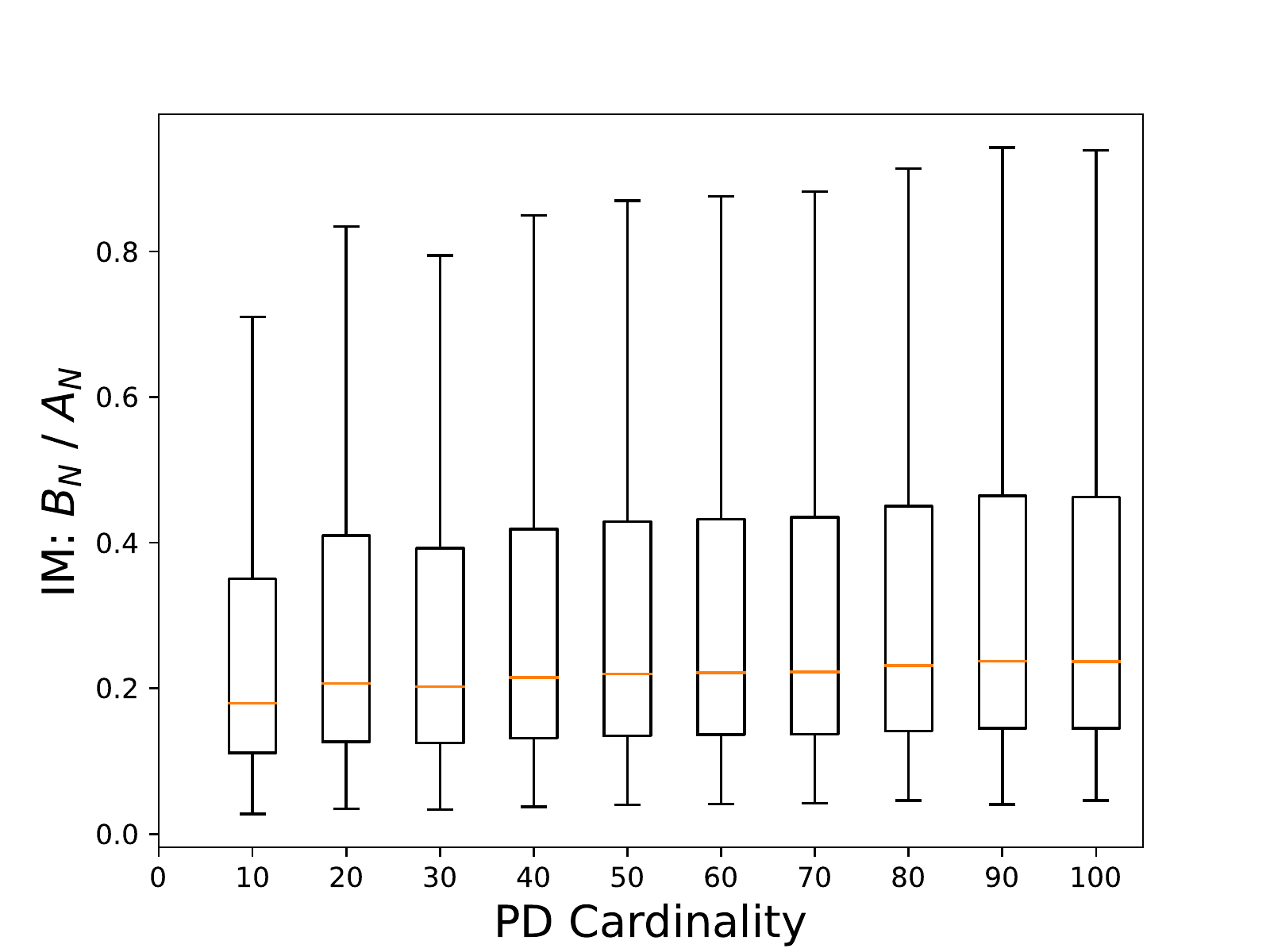}
\includegraphics[width=6.75cm]{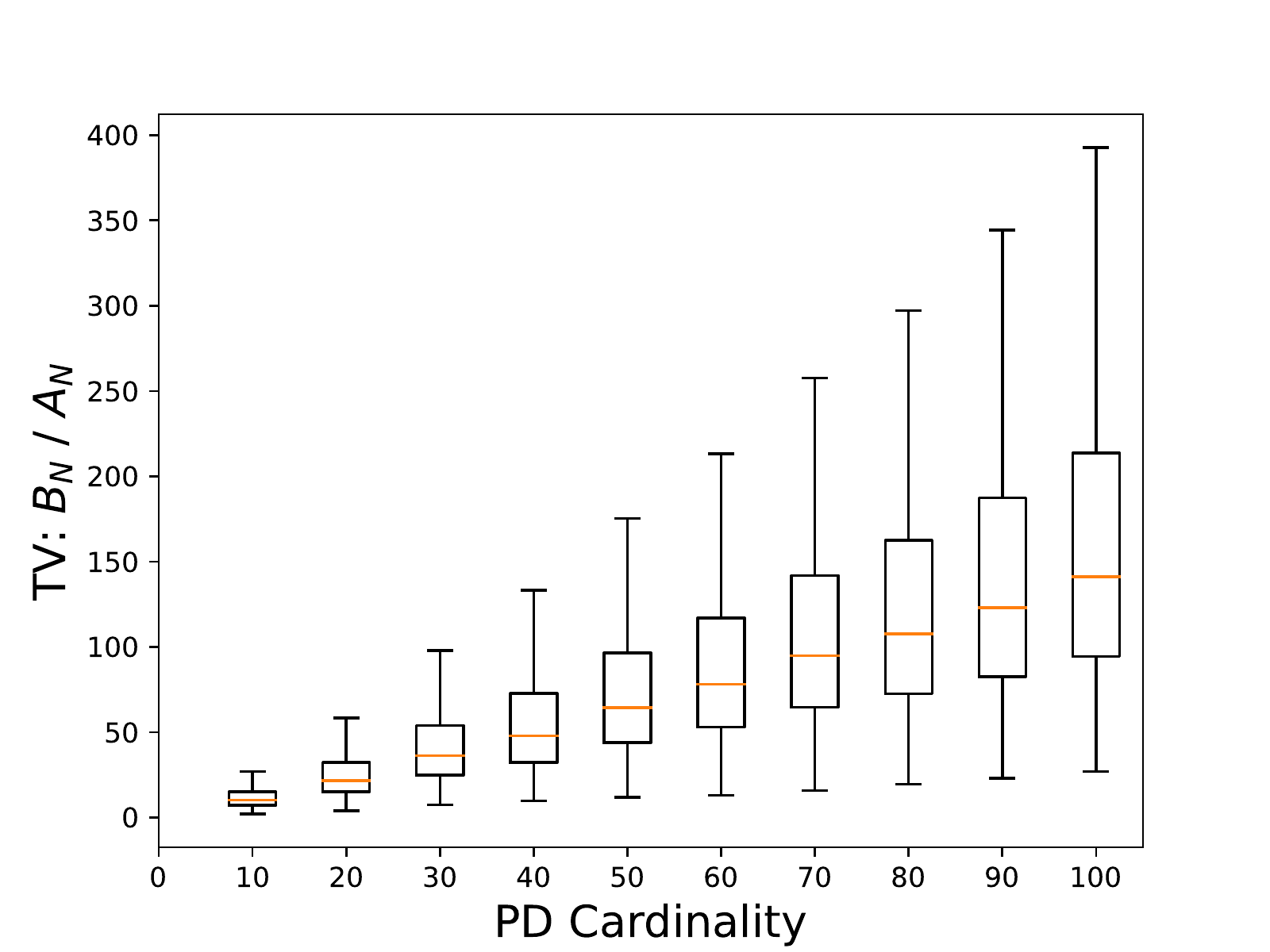}
\caption{\label{fig:boxplots} Boxplots of the ratios between distances induced by various feature maps and the first diagram distance $d_1$.}
\end{figure}

It is clear from Figure~\ref{fig:boxplots} that the extreme values of these ratios (the upper tail of the ratio distributions) increase with the
cardinality of the persistence diagrams, as expected from Theorem~\ref{thm:th1}. This is especially interesting in the case of the Sliced 
Wasserstein distance since the question whether the lower bound that was proved in~\cite{Carriere17e}, which increases with the number of points
in the diagrams, was tight or not, i.e., if a lower bound which is oblivious to the number of points could be derived, is still open. 
Hence, it seems from Figure~\ref{fig:boxplots} that this is not the case empirically.  
It is also interesting to notice that  the divergence speed of these ratios differ from a feature map to another. More precisely, it seems like the metric distortion bounds increase linearly
with the cardinalities for the TV and LS feature maps and the Sliced Wasserstein distance, while it is increasing at a much lower speed for the other feature maps.

\section{Conclusion}

In this article, we provided two important theoretical results about the embedding of persistence diagrams in separable Hilbert spaces, which
is a common technique in TDA to feed machine learning algorithms with persistence diagrams. Indeed, most of the recent attempts
have defined  feature maps for persistence diagrams into Hilbert spaces and showed these maps were stable with respect to the first diagram distance, 
and conjectured whether a lower bound holds as well or not.
In this work, we proved that this is never the case if the Hilbert space is finite dimensional, and that such a lower bound has to go to zero with
the number of points for most other feature maps in the literature. We also provided experiments that confirm this result, by showing a clear
increase of the metric distortion with the number of points for persistence diagrams generated uniformly in the unit upper half-square.

\bibliography{biblio}
\bibliographystyle{alpha}

\end{document}